\newtheorem{myDef}{Definition}
\newtheorem{proposition}{Proposition}
\newtheorem{definition}{Definition}
\definecolor{mbycolor}{RGB}{128,10,80}  
\journal{Artificial Intelligence}
\begin{document}

\begin{frontmatter}



\title{Active Legibility in Multiagent Reinforcement Learning}


\author{Yanyu Liu} 
\affiliation{organization={School of Automation, Central South University},
addressline={No.605 South Lushan Road}, 
city={Changsha},
postcode={410083},
state={Hunan},
country={China}}

\author{Yinghui Pan}
\affiliation{organization={National Engineering Laboratory for Big Data System Computing Technology, Shenzhen University},
city={Shenzhen},
state={Guangdong},
country={China}}

\author{Yifeng Zeng$^*$}
\affiliation{organization={Department of Computer and Information Sciences, Northumbria University},
city={Newcastle},
country={United Kingdom}}
\author{Biyang Ma}
\affiliation{organization={School of Computer Science, Minnan Normal University},
city={Zhangzhou},
state={Fujian},
country={China}
}

\author{Doshi Prashant}
\affiliation{organization={Department of Computer Science, University of Georgia},
city={Athens},
state={Georgia},
country={USA}}

\begin{abstract}

A multiagent sequential decision problem has been seen in many critical applications including
urban transportation, autonomous driving cars, military operations, etc.
Its widely known solution, namely multiagent reinforcement learning, has evolved tremendously in recent years. 
Among them, the solution paradigm of modeling other agents attracts our interest, which is different from traditional value decomposition or communication mechanisms. 
It enables agents to understand and anticipate others' behaviors and facilitates their collaboration.
Inspired by recent research on the legibility that allows agents to reveal their intentions through their behavior, we propose a {\it multiagent active legibility framework} to improve their performance.
 The legibility-oriented framework allows agents to conduct legible actions so as to help others optimise their behaviors. 
In addition, we design a series of problem domains that emulate a common scenario and best characterize the legibility in multiagent reinforcement learning.
The experimental results demonstrate that the new framework is more efficient and costs less training time compared to several multiagent reinforcement learning algorithms. 
\end{abstract}


\begin{highlights}
\item Research highlights 1. We propose a Legible Interactive-POMDP~(LI-POMDP) as well as a Multi-Agent Active Legibility (MAAL) framework, which enhances the legibility of individual agents' actions, thereby improving collaboration in multi-agent reinforcement learning environments.
\item Research highlights 2. We develop a series of problem domains that emphasize the speed and accuracy of goal recognition and cooperation between
agents, effectively showcasing the importance and necessity of legibility in multi-agent reinforcement learning.

\end{highlights}

\begin{keyword}
Legibility \sep Multiagent reinforcement learning \sep Multiagent interaction


\end{keyword}

\end{frontmatter}



\section{Introduction}
\label{sec:Introduction}

Multiagent Reinforcement Learning~(MARL), as a powerful method to tackle multiagent sequential decision problems, has grown tremendously in the recent two decades~\cite{tan1993multi,littman1994markov}. 
In the MARL research development, cooperative tasks among agents have emerged as one of the main focuses. Enabling agents to learn cooperative behaviors facilitates the completion of more complex tasks, thereby providing greater benefit to human life.
When MARL evolves to learn agents' collaboration, it often leads to two branches: value decomposition and centralized-critic. The value decomposition methods train a global Q-network with the consideration of global information and are able to overcome the MARL instability, e.g. VDN~\cite{sunehag2017value}, QTRAN~\cite{son2019qtran}, QMIX~\cite{rashid2020weighted}, etc. The centralized-critic methods aim to learn a centralized critic network, and then use it to train distributed actor policy, e.g. MADDPG~\cite{lowe2017multi}, COMA~\cite{foerster2018counterfactual}. 
Beyond this, communication has been adopted in MARL  to transmit local information, which can be aggregated into a global perspective.
Foerster {\it et al.}~\cite{foerster2016learning} first investigated the autonomous communication among agents in the learning. Many other works~\cite{sukhbaatar2016learning,peng2017multiagent,wang2020learning,yuan2022multi} have contributed to the communication in MARL. 
The communication mechanisms, however, encounter challenges, e.g. communication interference and noise, bandwidth limitations, etc. in practical engineering applications.

Recently, a new framework appears by modeling other agents in MARL~\cite{gmytrasiewicz2005framework,he2016opponent}. This framework assumes that it is beneficial for each agent to account for how the others would react to its future behaviors. Wen {\it et al.}~\cite{wen2019probabilistic} presented a probabilistic recursive reasoning~(PR2) framework, which adopted variational Bayes methods to approximate the opponents’ policy, to which each agent found the best response and then improved its own policy. 

Empirically, predicting the intended end-product~(goal), as well as the sequence of steps (plan), can be helpful for improving the performance of agents' interaction in MARL~\cite{albrecht2018autonomous,dann2023multi,zhang2023intention}.
In MARL, agents could model the intentions and policies of their teammates, allowing them to better coordinate their actions and optimize their collective performance. 
By recognizing others' plans, the agents can adapt their own behaviors to support or complement their teammates’ goals therefore leading to more efficient and effective collaboration. 
However, solely relying on the modeling capabilities of the agents has limitations and faces numerous challenges in enhancing system performance.
One of the challenges is {\it ambiguity}: extracting the plan or predicting the future actions from observed trajectories could be confusing and complex, as trajectories may have multiple interpretations or could be misinterpreted by unforeseen factors. 
Once the agent recognizes the incorrect intention, the consequence could be devastating and catastrophic under some circumstances. On the other hand, the contradiction between the recognition accuracy and computational complexity severely limits the generalization of plan recognition in MARL. 

Recently, {\it legibility} has been introduced to facilitate agents to convey their intentions through their behaviors~\cite{miura2021maximizing}. 
In other words, the legibility could reduce the ambiguity over the possible goals of agents from an observer's perspective and improve human-and-agent collaboration.
The most recent work~\cite{liu2023improvement}, namely Information Gain Legibility~(IGL), used the information entropy gain to shape the reward signal and improved the {\it agent-to-human} policy legibility. Compared to the existing methods~\cite{dragan2013generating,dragan2013legibility}, IGL is much easier to implement and can theoretically be extended to many reinforcement learning methods.
Thus, we will investigate the MARL legibility further in this paper.
\begin{figure}[h]
\centering
    \subfigure[One sample of the illegible route]{\includegraphics[width=0.4\linewidth]{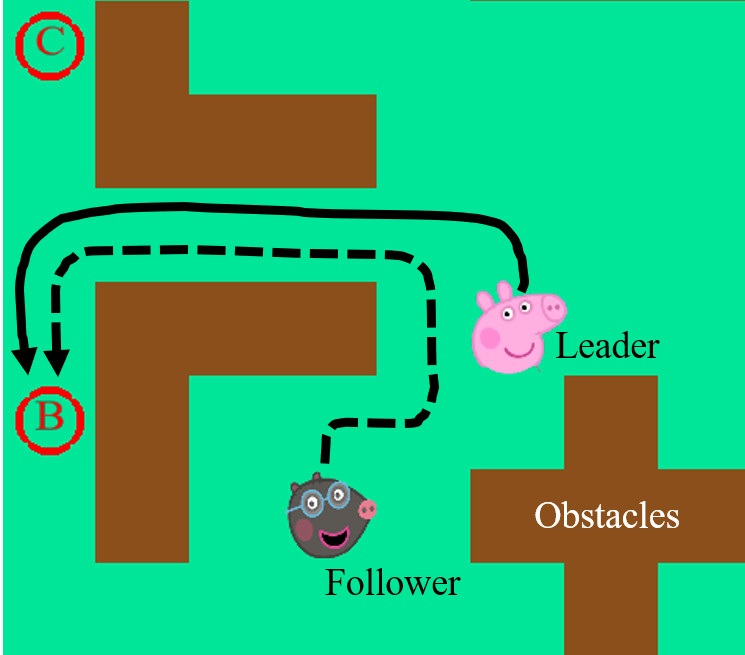}
    \label{fig:unlegible}}
    \subfigure[One sample of the legible route]{\includegraphics[width=0.4\linewidth]{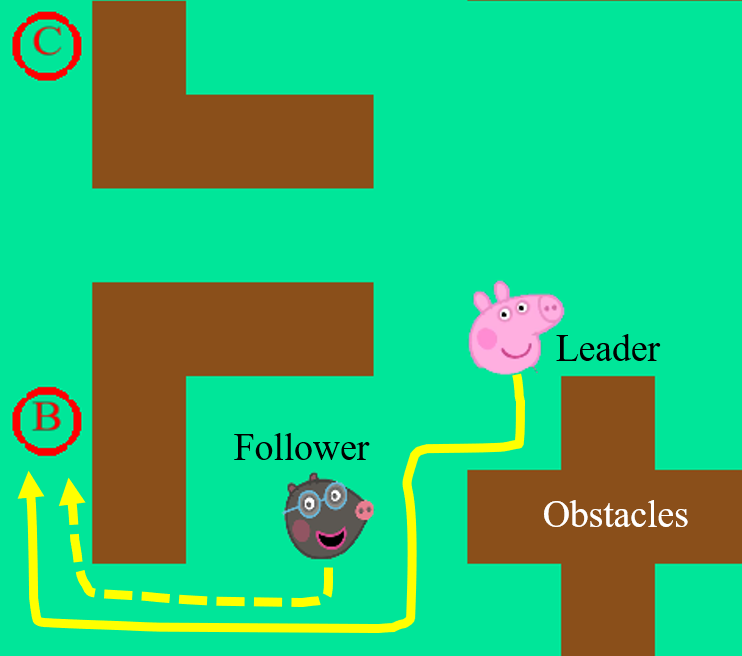}
    \label{fig:legible}}
    \caption{Two trajectories are executed by the two agents~(one leader and its follower) who aim to complete the grasping task}
    \label{fig:why-need-legibility}
\end{figure}

We begin with a toy example to show how  the legibility facilitates multiagent collaboration in Fig.~\ref{fig:why-need-legibility}. 
Two agents~(one leader and its follower) are rendered in a grid world and aim to meet at one of the exits (red circle with a letter). Only the leader knows which exit is the target, and the follower must identify the target from the leader's trajectory and get there. 
Fig.~\ref{fig:unlegible} demonstrates one way of completing the task, where the leader chooses the shortest route~(solid black line) to its target $B$. However, from the follower's perspective, the leader has an equal chance of going to $B$ and $C$ in the first five steps. With this confusion, the follower has to take a longer detour to tailgate the leader~(dashed black line), and the two agents take eighteen moves in total eventually. 
In contrast, Fig.~\ref{fig:legible} shows that the navigator sacrifices the optimality, choosing a longer route~(solid yellow line) to improve the legibility, helping the follower identify the target earlier and take a shortcut. In this case, by utilizing the legibility, the leader agent ultimately reduces the total steps into sixteen moves therefore increasing their collaboration.

{
The example shows that for some specialized multiagent tasks, introducing the legibility can reduce the ambiguity in agents' modeling, providing advantages that cannot be achieved solely by improving agents' learning capabilities, which raises the upper limit of the talk completion performance. 
Additionally, the legibility can also increase the RL policy transparency and interpretability, which is highly significant for safety-critical domains such as military, healthcare, finance, etc~\cite{abouelyazid2024reinforcement,charpentier2021reinforcement}.}

{Driven by the aforementioned concepts, we contribute a novel approach to multiagent reinforcement learning by conducting the first exploration on the MARL legibility.
In this paper, we present a Multiagent Active Legibility~(MAAL) framework that exploits the legibility in MARL through individual agents' perspectives.
In MAAL, a subject agent adapts a reward shaping technique to conduct legible actions by narrowing down the margin between the predicted actions from other agents and the true ones to be executed by the subject agent.
Compared with the previous IGL research, MAAL improves the {\it agent-to-observer} legibility in a RL agent and seeks to achieve seamless cooperation between the agent and observer.
Furthermore, unlike IGL which leverages information entropy of the observer's beliefs to shape the reward function, MAAL uses the variation in the Kullback–Leibler~(KL) divergence between the observer's beliefs and the true goal before and after executing their actions, and instructs the agent to derive the legibility reward function. 

The main contributions of this paper are as follows: 

\begin{itemize}
    \item We define a Legible Interactive-POMDP~(LI-POMDP) and propose a Multi-Agent Active Legibility (MAAL) framework, which enhances the legibility of individual agents' actions, thereby improving collaboration in multi-agent reinforcement learning environments.
    \item We design a series of problem domains that simulate typical scenarios, effectively showcasing the importance of legibility in multi-agent reinforcement learning.
    \item We conduct extensive comparative experiments, ablation studies, and theoretical analyses, demonstrating the convergence and effectiveness of the MAAL framework.
\end{itemize}
The remainder of this paper is organized as follows: Section~\ref{sec_related} examines relevant representative research. Section~\ref{Preliminary} introduces the background knowledge pertaining to multiagent decision-making and the policy legibility. We present the new MAAL framework in Section \ref{sec_method}. Section~\ref{canalysis} offers a precise convergence analysis of the legibility algorithm. The experimental settings and discussion on simulation results are presented in Section \ref{sec_exp}. Lastly, we summarize this research and brief the future work in Section \ref{sec_conslusion}.
}

\section{Related Works}
\label{sec_related}
In this section, we will review the relevant research on three aspects: MARL, modelling other agents and legibility. Referring to multiagent reinforcement learning, there has seen a series of classical methods. Sunehag {\it et al.}~\cite{sunehag2017value} proposed the Value-Decomposition Networks~(VDN), which views the system's overall value as the summation of the individual value of agents. VDN substantially 
 overcomes the non-stationarity of MARL in a concise way. However, the linear summation may not accurately represent the overall value in some complex environments. QMIX~\cite{rashid2020weighted} inherits the ideology and proposes to approximate a monotonic function between the local Q-network and the overall Q-network with the nonlinear neural network. Son {\it et al.}~\cite{son2019qtran} proposed QTRAN to decompose the joint Q-value function into a sum of task-specific value functions and used task-relational matrices to transfer knowledge between the tasks - being free from the subjective {\it Additivity} and {\it Monotonicity}. 
 
 In parallel, a lost of research expands the policy gradient method to MARL applications. Lowe {\it et al.}~\cite{lowe2017multi} proposed multiagent Deep Deterministic Policy Gradient~(MADDPG), where agents learned to estimate their own action-value function by utilizing a centralized critic network that took into account the joint actions of all the agents. Then Foerster {\it et al.}~\cite{foerster2018counterfactual} introduced Counterfactual Multiagent~(COMA) that used the counterfactual baseline, which attempted to estimate what the value would have been with alternative joint actions. This research has almost dominated the MARL development in recent years, and has achieved promising results in solving complex problems such as {\it StarCraft Multiagent Challenge}. 

In human cooperation activities, persons always subconsciously predicts others' next moves, or intentions to rectify their own policy. Gmytrasiewicz {\it et al.}~\cite{gmytrasiewicz2005framework} proposed interactive POMDP~(I-POMDP) that introduced models of other agents into the partially observed MDP. In I-POMDP, an agent is able to construct belief models regarding its understanding of the knowledge and beliefs held by other agents. However, solving the I-POMDP is rather difficult due to the inherent complexity even with purpose-designed methods~\cite{doshi2008generalized,doshi2009monte,sonu2015scalable}. 
Wen {\it et al.}~\cite{wen2019probabilistic} proposed Probabilistic Recursive Reasoning~(PR2) which implemented the recursive reasoning in MDP, and encouraged the modeling method from game theory\cite{doshi2010modeling,fotiadis2022recursive}. Later, they proposed Generalized Recursive Reasoning~(GR2)~\cite{wen2019modelling} which took bounded rationality into the consideration and extended the reasoning level to an arbitrary number. However, the inherent computational and action complexity of RR limits its application and, in some scenarios, the recursive reasoning can even be an NP-hard problem. On the contrary, plan recognition is not only more computation-friendly, but less reliant on a complete and accurate model of beliefs, preferences, and actions, as well as mutual beliefs and knowledge among other agents~\cite{albrecht2018autonomous}.

The existing work has shown the impact of legibility on agent's policies.
Dragan {\it et al.}~\cite{dragan2013legibility} presented the mathematical definition of legibility, and proposed a model in which agent could evaluate or generate motion by the functional gradient optimization in the space trajectory~.
Holladay {\it et al.}~\cite{holladay2014legible} studied the implication of legibility in robot pointing, e.g. producing pointing gestures.
Nikolaidis {\it et al}.~\cite{nikolaidis2016based} explored the impact of the observer's viewpoint on the legibility of motion and trajectories and proposed a viewpoint-based strategy for optimizing legibility and offered new insights into using occlusion to generate deliberately ambiguous or deceptive movements.
Bied {\it et al.}~\cite{bied2020integrating} proposed a reward-shaping method that embeds an observer model within reinforcement learning to enhance the legibility of an agent's trajectory.
Busch {\it et al.}~\cite{busch2017learning} introduced a model-free reinforcement learning algorithm to optimize a general, task-independent cost function, together with an evaluation strategy to determine whether the learned behavior is universally legible.
Persiani {\it et al.}~\cite{persiani2023policy} proposed a method that utilizes Bayesian Networks to model both the agent and the observer. They optimized the legibility of the agent model by minimizing the cross-entropy between these two networks. 
Bernardini {\it et al.}~\cite{bernardini2024optimizing}  presents a joint design for goal legibility and recognition in a cooperative, multi-agent scenario with partial observability. 
Zhao {\it et al.}~\cite{zhao2020actor} conducted legibility tests within the context of Deep Reinforcement Learning (DRL) and suggested the use of Recurrent Neural Networks (RNN) as motion predictors to score the legibility of actions.
Miura {\it et al.}~\cite{miura2020maximizing,miura2021maximizing} extended the work on maximizing legibility from deterministic settings to stochastic environments, and introduced a Legible Markov decision problem~(L-MDP).
They also proposed Observer-Aware MDP~(OAMDP)~\cite{miura2021unifying} as a less complex framework of I-POMDP under certain conditions and extended it with explicit communication as Communicative Observer-Aware MDP (Com-OAMDP)\cite{miura2024observer}.
Faria {\it et al.}~\cite{faria2024guess} proposed a more computation-friendly framework compared to L-MDP, namely Policy Legible MDP~(PoLMDP), which considerably lowered the complexity by solving nearly a standard MDP.
This work effectively enriches the agent's ability to convey intentions to human observers through actions  and achieves the smooth human-and-robot cooperation. 

\section{Preliminary}
\label{Preliminary}
\subsection{Markov Decision Process and I-POMDP}
The problem of learning a goal-directed policy is typically abstracted into a mathematical framework known as Markov Decision Process~(MDP), which is a discrete-time stochastic process with the Markovian property. 
In MDP, the future state depends solely on the current state and the action taken, independent of past states and actions. 
In this paper, we define MDP as a tuple $\mathcal{M}=\left\{ \mathcal{S},\mathcal{A},\mathcal{T},R,\gamma \right\}$, where $\mathcal{S}$ denotes the state space, $\mathcal{A}$ denotes the action space, $\mathcal{T}$ denotes transition probability, $R$ denotes the reward signal, and $\gamma\in[0,1]$ denotes the discount factor.
In traditional single-agent RL, we seek to find a policy $\pi$ that maximizes the expected return over the states $s\in \mathcal{S}$.
\begin{equation}
    \label{eq:MDP_objective_function}
    \mathcal{J}(\pi)=\mathbb{E}_{\pi}[\sum_{t=0}^{\infty}\gamma^tr_t]
\end{equation}

In MARL, the environment is non-stationary from the perspective of each agent because other agents are learning and changing their policies simultaneously. This makes it challenging for the agents to learn stable policies. One solution is the development of Interactive-POMDP proposed~\cite{gmytrasiewicz2005framework}, which enables agents to utilize more sophisticated techniques to model and predict behaviors of other agents. 
The I-POMDP for the agent {$A^i \in\{A^1,...,A^N\}$} is defined as $\left\{ IS_i,\mathcal{A}, \mathcal{T}, \Omega_i,\mathcal{O}_i,R_i \right\}$, in which
the most notable component is the interactive state $IS_i=\mathcal{S} {\times} M$, where $M$ holds all possible models of other agents.
Apart from this, other components of I-POMDP are similar to a standard POMDP. 
The great I-POMDP contribution lies in incorporating the modeling of other agents into a subject agent's decision optimization, enabling the agents to observe and coordinate with each other.

\subsection{Policy Legible Markov Decision Process}
{Recently, Faria~{\it et al.}\cite{faria2024guess} introduced legibility into traditional MDPs, namely Policy Legible Markov Decision Process~(PoLMDP), denoted as $\left\{ \mathcal{S},\mathcal{A}, \mathcal{T}, {\mathcal{R}},{R},\gamma, \beta \right\}$.
A PoLMDP is defined in the context of an environment with $N$ different objectives, each of which is represented by a different reward function $R_n,n=1,...,N$ and thus with a different MDP $\mathcal{M}_n$.
$\mathcal{R}$ is the legible reward function, denoted as:
\begin{equation}
    \mathcal{R}(s,a)=P(R_n|s,a)
\end{equation}
where $P(R_n|s,a)$ can be reformulated via Bayes' Theorem below.
\begin{equation}
    P(R_n|s,a) \propto P(s,a|R_n)P(R_n)
\end{equation}

This transformation allows us to express the conditional probability of a reward $R_n$ given a state $s$ and action $a$ in terms of the conditional probability of the state and action given the reward, multiplied by the prior probability of the reward.
$P(s,a|R_n)$ can be determined by applying the maximum-entropy principle, with $\beta$ serving as the hyper-parameter. Here, $Q^*_n$ represents the optimal Q-function for the MDP $\mathcal{M}_n$, and is calculated as follows:
\begin{equation}
    P(s,a|R_n)=\frac{\exp({\beta}Q^*_n(s,a))}{\sum_{m=1}^{N}\exp({\beta}Q^*_m(s,a))}
\end{equation}

PoLMDP  has the advantage on the computational simplicity and ease of implementation. However, PoLMDP requires the optimal policy under the current objective during the training, which is challenging in multiagent environments where each agent's policy is constantly updated. 
Moreover, the completeness of PoLMDP is difficult to be guaranteed. 
Hence, expanding upon its framework, we propose a novel reward shaping function that is better suited for multiagent interaction. 
Additionally, we conduct a formal analysis of convergence and completeness on achieving the legibility.}

\section{Active Legibility through Reward Shaping}\label{sec_method}
     
Research of single-agent reinforcement learning has been well explored; however, things get very different when there are more than one agents in one common environment. 
The most challenging is the non-stationarity in a multiagent setting, which is caused by the changing of other agents' policies over time as they learn. 
\textcolor{black}{
Building on the previous exploration of legibility, we first define the Legible Interactive POMDP (LI-POMDP) in Sec. \ref{LIpomdp}, which provides the theoretical foundation for the subsequent work. In Sec. \ref{framework_maal}, we then propose a Multiagent Active Legibility (MAAL) framework. The MAAL framework is an implementation of LI-POMDP and aims to improve the legibility of an individual agent's actions to other agents, thus enabling the subject agent to be more easily modeled by others.
}

\subsection{An Interactive POMDP Framework with Legibility}
\label{LIpomdp}
We consider $N$ collaborative agents {$\{A^1,...,A^N\}$} to achieve the overall goal in a common environment. 
Then we assume that the goal $\mathbf{g}$ can be decomposed into $M\geq N$ sub-goals, e.g. $\mathbf{g} = \{g^k| k=1,\ldots,M\}$, and for each sub-goal $g^k \in \mathcal{G}$,  $g^k$ is assigned to an individual agent $A^i\in \{A^1,...,A^N\}$ . The completion of all sub-goals results in succeeding the tasks. 
To facilitate the legibility development, we formulate the extended POMDP  below from the perspective of individual agents. Similar to the modelling paradigm of interactive POMDPs, the extension is to enhance the generation of legible policies for individual agents by predicting other agents' beliefs over their goals. 
\begin{myDef}
 For a subject agent {$A^i \in\{A^1,...,A^N\}$}, a legible I-POMDP~(LI-POMDP) is defined as: 
\begin{equation}
    LI\text{-}POMDP_i=\left\{
    \mathcal{S}, 
    \mathcal{A}, 
    \mathcal{T}^i,
    \mathcal{O}^i,  
    \Omega^i,       
    {R}^i,
    \mathcal{G},
    \mathcal{B}, 
    \mathcal{I}^i,
    \mathcal{P}^i,
    \mathcal{R}^i
     \right\}
\end{equation}   
\end{myDef}
where :
\begin{itemize}
    \item $\mathcal{S}$ is the state space of the environment.
    \item $\mathcal{A}=\mathcal{A}^{i}\times\mathcal{\tilde{A}}^{-i}$ denotes the joint action space of the multiagent system, in which $\mathcal{A}^{i}$ is the executable action space of agent $A^i$, and $\mathcal{\tilde{A}}^{-i}$ is the observation actions of $A^{-i}$ by agent $i$. 
    \item $\mathcal{T}^i : \mathcal{S} \times \mathcal{A}
    \times \mathcal{S} \rightarrow [0, 1]$ is the transition function.
    \item $\mathcal{O}^i$ is the set of observations the agent $i$ receive from environment.
    \item $\Omega^i:\mathcal{S}\rightarrow\mathcal{O}^i$ is the observation function and controls what $A^i$ can receive in state $s$.
    \item ${R}^i$ is the raw reward signal sent from the environment.
    \item $\mathcal{G}=g^i \times g^{-i}$ denotes the set of goals in a multiagent system, where $g^i$ represents the specific goal of agent $A^i$ and $g^{-i}$ is the combination of goals of agents $A^{-i}$.
    \item $\mathcal{B}=\mathcal{B}^{-i}\times\mathcal{B}^{i}$ is the belief over goals $\mathcal{G}$, in which $\mathcal{B}^{-i} \in \mathbb{R}^{(N-1)|\mathcal{G}|\times 1}$ signifies the belief space of $A^i$'s beliefs about all other agents in the system, and $\hat{\textbf{b}}^{-i} \in \mathcal{B}^{-i}$ detailing $A^i$'s prediction about the goals of $A^{-i}$ {\it (what $A^i$ thinks about the collective goals of all other agents)}. $\mathcal{B}^{i} \in \mathbb{R}^{|\mathcal{G}|\times 1}$ signifies the belief space of $A^i$'s prediction of how $A^{-i}$ views $A^i$~($\hat{\textbf{b}}^{i} \in \mathcal{B}^{i}$). In other words, $\hat{\textbf{b}}^{i}$ can be interpreted as {\it what $A^i$ thinks $A^{-i}$ thinks about $A^i$}.
    \item $\mathcal{I}^i:\mathcal{O}^i\times\Tilde{\mathcal{A}}^{-i}{\longrightarrow}\mathcal{B}^{-i}$ is the function for agent $i$ to infer and predict the goals of agent $A^{-i}$.
    \item $\mathcal{P}^i:\mathcal{O}^i\times\Tilde{\mathcal{A}}^{-i} {\longrightarrow}\mathcal{B}^{i}$ is the function to indicate how others agent $-i$ are predicting agent $i$'s goal.
    \item $\mathcal{R}^i$ is the reward shaping function to enhance the legibility of $A^i$'s policy. It is derived from the original reward $R^i$ by the environment~(to be elaborated in Section~\ref{sec:KLG}). 
    
\end{itemize}

{
We begin with modifying the objective function in Eq.~\ref{eq:MDP_objective_function} and propose a new legible objective function in Eq.~\ref{eq:legible_objective}.
\begin{equation}
    \mathcal{J}(\pi^i)=\mathbb{E}_{\pi^i}[\sum_{t=0}^{\infty}\gamma^tr^i_t-{\beta}D_{KL}(\hat{\textbf{b}}^{i}_{t}||\mathbf{g}^i)]
    \label{eq:legible_objective}
\end{equation}
where $\mathbf{g}^i\in \mathbb{R}^{|\mathcal{G}|\times 1}$ is the true distribution of  agent $A^{i}$ over the goals $\mathcal{G}$, denoted as $\mathbf{g}^i = f_{\text{one-hot}}(g^i)$. $\beta \in \mathbb{R}^+$ is the legibility weight to control the legibility level, $\hat{\textbf{b}}^{i}_{t}$ is the estimation of the predictive distribution of $g^i$ from other agents $A^{-i}$ at the time slice $t$, and $D_{KL}$ is the Kullback-Leibler divergence between $\hat{\textbf{b}}^{i}_{t}$ and $\mathbf{g}^i$.
Eq.~\ref{eq:legible_objective} maximizes the discounted cumulative reward and simultaneously seeks to minimize the margin between the other agent's prediction and the true goal of the subject agent.
}

\subsubsection{Modeling and Predicting Other Agents}
\label{sec:plan_recognition}
Plan recognition involves interpreting the intentions and plans of other agents, as well as anticipating their future actions. 
It largely depends on the analysis of agents' behaviours in context.
By harnessing Bayesian update, we can elevate real-time plan recognition, thereby bolstering the system's overall clarity and transparency.


Initially, we establish a model for agent $A^{i}$ to infer the goals of agent $A^{-i}$ based on observed actions $\Tilde{\mathcal{A}}^{-i}$ and observations $\mathcal{O}^i$. This process is represented as:

\begin{equation}
    \mathcal{I}^i:\mathcal{O}^i\times\Tilde{\mathcal{A}}^{-i}{\longrightarrow}\mathcal{B}^{-i}
\end{equation}
where ~$\hat{\textbf{b}}^{-i}\in \mathcal{B}^{-i}$ computes agent $A^{i}$'s belief over the goals of agents $A^{-i}$ based on its observations $o^i_t \in\mathcal{O}^i$ and observed actions $\Tilde{a}^{j}_{t-1} \in \Tilde{\mathcal{A}}^{j}$. 
This involves calculating agent $A^{i}$'s individual belief about the goals of other $A^j \in A^{-i}$.

At the outset of interaction ($t=0$), agent $A^{i}$ holds a uniform belief about other agents $A^{j}$, denoted as $\hat{\textbf{b}}_{0}^j(g^j) =\frac{1}{|\mathcal{G}|},\forall g^j \in \mathcal{G}$. 
Consequently, a function mapping a policy or trajectory to an agent's goal belief  is to update this prediction. 
For simplicity, we introduce the Bayesian approach~\cite{baker2014modeling}  to update the belief of real-time intention identification:

\begin{equation}
\begin{aligned}
    \hat{\textbf{b}}_{t}^j(g^j)&=\frac{1}{|\mathcal{G}|},\forall g^j \in \mathcal{G}, t=0\\
    \hat{\textbf{b}}_{t}^j(g^j)&=\frac{\hat{\pi}^j(o^i_t,\Tilde{a}^{j}_{t-1}|g^j)}{\sum_{g'\in\mathcal{G}}\hat{\pi}^j(o^i_t,\Tilde{a}^{j}_{t-1}|g')\hat{\textbf{b}}_{t-1}^j(g')}\hat{\textbf{b}}_{t-1}^j(g^j),\forall g^j \in \mathcal{G},t\neq0 
        \label{eq:update_belief}
\end{aligned}
\end{equation}
where $\hat{\pi}^j$ hypothesizes an action distribution over $A^j$. 

Additionally, neural networks, like Recurrent Neural Networks~(RNN) or Long Short-Term Memory~(LSTM), can be incorporated as $\mathcal{I}^i$ due to their ability to process sequential data.
In the end, we concatenate all the individual beliefs with the operator $\odot$ as $\hat{b}^{-i}$ in Eq.~\ref{eq:i}.
\begin{equation}
\label{eq:i}
    \hat{\textbf{b}}^{-i}_{t}= \odot_{j\in[1,N]}^{j\neq i} \hat{\textbf{b}}^{j}_{t}
\end{equation}


After modeling the prediction of other agents' goals, we reverse the process to infer how they might predict our goals. In the LI-POMDP framework, the primary objective is to enhance the agent $i$'s legibility, making its goal $g^i$ more discernible by others. To achieve this, agent $A^{i}$ must be able to reason about how other agents perceive its goals:

\begin{equation}
   \mathcal{P}^i:\mathcal{O}^i\times\Tilde{\mathcal{A}}^{-i} {\longrightarrow}\mathcal{B}^{i}
\end{equation}

{When there are more than two agents in the system, $\mathcal{B}^{i}$ is defined as a weighted sum:
\begin{equation}    
\label{eq:p}
    \hat{\textbf{b}}^i=\frac{1}{N-1}\sum_{j\in [1,N]}^{j \neq i}\omega_{i,j} \hat{\textbf{b}}^{i|j}   
\end{equation}
where $\hat{\textbf{b}}^i \in \mathcal{B}^i $ is the belief of how other agents predict agent $i$'s goal, $\hat{\textbf{b}}^{i|j}$ represents the probability distribution of agent $j$'s prediction of agent $i$'s goals, and $\omega_{i,j}$ can be used to adjust the extent of legibility that agent $i$ expresses to different agents, allowing agent $i$ to prioritize enhancing the legibility towards those agents that are more beneficial to its collaborative efforts.
}

{
The implementation of $\mathcal{P}^i$ varies with the training paradigm. 
In {\it decentralized training} training, agent $i$ utilise an estimator to infer $\hat{\textbf{b}}^i$ based on the actions of other agents $-i$, assessing whether they have understood its intentions through the coordination in their actions. This process, known as {\it recursive reasoning}~\cite{gmytrasiewicz2000rational,de2017negotiating} in game theory, can be exemplified as "I believe that you believe that I believe...". 
In a {\it centralized training} setting, $\mathcal{P}^i$ is more straightforward, as agent $i$ can directly communicate or query~\cite{xuan2001communication,payne2002communicating} the other agents $-i$'s understanding of its intentions (i.e., their estimation of its goals), allowing for more precise and computational-friendly optimization of legibility.}





\subsubsection{Making Legible Decision}

To complete tasks efficiently, $A^i$ chooses an action that is not only to accomplish its own goal $g^i$ but also cooperate with {the $A^{-i}$'s goal} in the decision process. 
Therefore, the policy input for $A^i$ is the concatenation of observation $o_t$, its own goal $g^i$, and the estimation of goals of other agents $\hat{\textbf{b}}^{-i}_{t}$: $\pi^i(o_t \odot \textbf{g}^i \odot \hat{\textbf{b}}^{-i}_{t})\xrightarrow{}a^i_t$.
When the environmental state is transited to $s_{t+1}$ by the joint actions $[a^i_t,a^{-i}_t]$, the reward $r^i_t$  for $A^i$ is received and its policy is updated. 
{At this point, $r^i_t$ will be manipulated to $\Tilde{r}^i_t$ and fed back to $A^i$ to improve the legibility while updating policy.}


We utilize the \textcolor{black}{{\it reward shaping}} technique to enable the agent's behavior to be legible in solving MDP with the new objective function in Eq.~\ref{eq:legible_objective}. 
We define the KL-divergence Gain~(KLG), denoted by $\Delta D_{KL}(\mathcal{A}^i)$, as the difference of $D_{KL}(\hat{\textbf{b}}^{i}||\mathbf{g}^i)$ before and after the action $a^i_{t-1}$ is executed in the state $s_{t-1}$.
\begin{equation}
\label{eq:delta_Dkl}
    \Delta D_{KL}(a^i_{t-1}) = D_{KL}(\hat{\textbf{b}}^{i}_{{t-1}}||\mathbf{g}^i)-D_{KL}(\hat{\textbf{b}}^{i}_{{t}}||\mathbf{g}^i), t \geq 1
\end{equation}

{In Eq.\ref{eq:delta_Dkl}, $\Delta D_{KL}(a^i_{t})$ quantifies the amount of ambiguity reduced once the  action $a^i_{t-1}$ is executed.}
For instance, if the action $a^i_{t-1}$ from $A^i$ is very informative for $A^{-i}$ to distinguish the plan, the $\hat{\textbf{b}}^{i}_{t}$ would converge to the {subject} agent's true goal ${g}^i$ with the execution of $a^i_t$:$\hat{\textbf{b}}^i({g}^i)\xrightarrow{a^i_{t-1}}1$
, and $D_{KL}(\hat{\textbf{b}}^{i}_{t}||\mathbf{g}^i)$ would therefore reduce with it, and vice versa. 
Then, we add the legibility term $\Delta D_{KL}(a^i_{t-1})$ to the original reward signal $r^i_{t-1}$ with the parameter $\beta$ to balance the scale.

\begin{equation}
    \Tilde{r}^{i}_{t-1}=r^{i}_{t-1} + {\beta}\Delta D_{KL}(a^i_{t-1})
    \label{reward_shaping}
\end{equation}
 
With the legibility incorporation, $\Tilde{r}^{i}_{t-1}$  is used in the policy update of $A^i$ through reinforcement learning. 
To summarize it, the legibility works in the following way: if $a^i_t$ is helpful for $A^j\in A^{-i}$ to recognize the true goal of $A^i$, the KL-divergence between $\hat{\textbf{b}}^i$ and $\mathbf{g}^i$ reduces, leading to a positive $\Delta D_{KL}$, and eventually encouraging $A^i$ to be more likely to choose $a^i_t$ by increasing the reward.

In the end, for each agent, the problem can be simplified to an MDP and solved using single-agent reinforcement learning methods (with the environmental uncertainty already encapsulated in $(o^i \odot \textbf{g}^i \odot \textbf{b}^{-i})$). The policy update is performed using the tuple \([(o^i_{t-1}\odot \textbf{g}^i \odot\textbf{b}^{-i}_{t-1}), a^i_{t-1}, (o^i_{t}\odot \textbf{g}^i \odot\textbf{b}^{-i}_{t}), \tilde{r}^i_{t-1}]\).

 
  
 \subsection{The MAAL Framework}
 \label{framework_maal}
In this section, we delve into the MAAL framework, elucidating its workings through a scenario as shown in Fig.~\ref{fig:legibility_rewardshaping}.
The scenario involves $N$ agents, categorized as $A^i$ and $A^{-i}$. Their shared objective, denoted as $\mathcal{G}$, is divisible into the sub-goals: $\mathcal{G} = g^i \cup g^{-i}$. A Success is achieved when agent $A^i$ reaches $g^i \in \mathcal{G}$ and agents $A^{-i}$ attain $g^{-i} \in \mathcal{G}$, with mutual awareness of the accomplishment. The dashed lines colored by orange in Fig.~\ref{fig:legibility_rewardshaping} represent the operational framework from the  perspective of agent $A^i$, equally applicable to $A^{-i}$.
 \begin{figure}[h]
    \centering
    \includegraphics[width=1.0\linewidth]{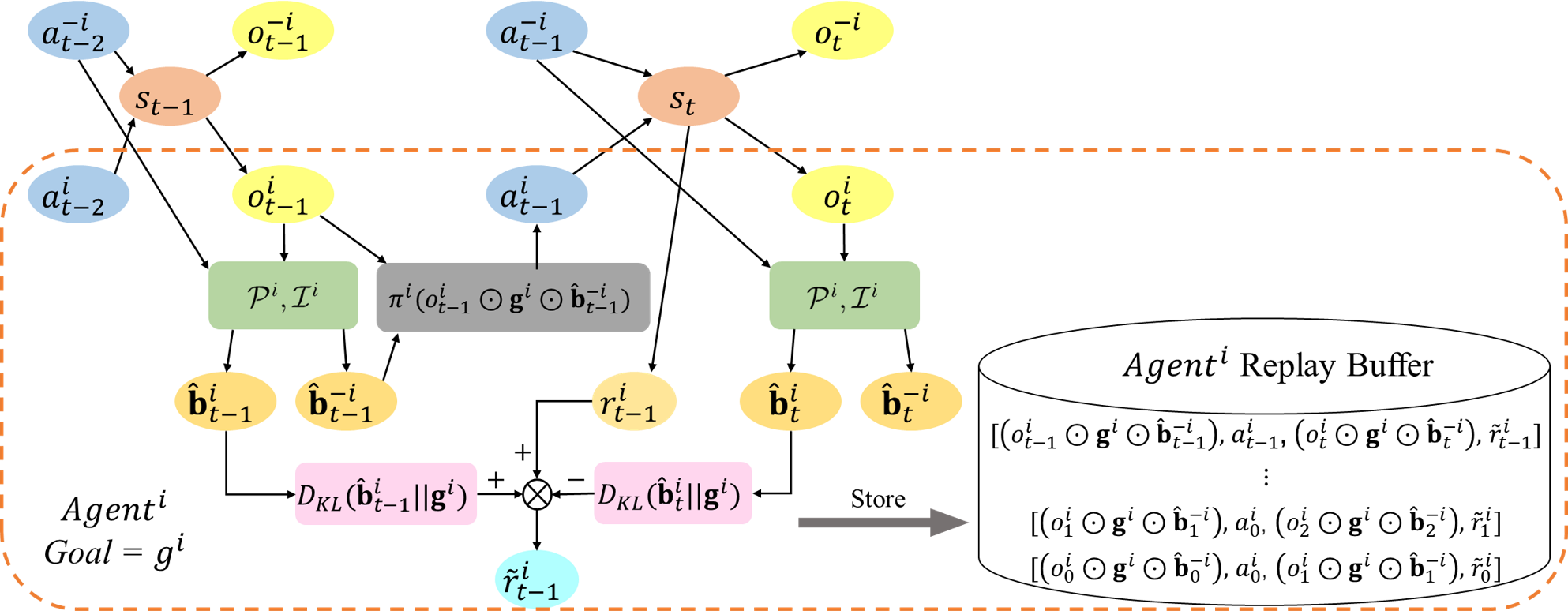}
    \caption{The new framework of Multiagent Active Legibility~(MAAL) is presented from the single-agent perspective~($Agent^i$).}
    \label{fig:legibility_rewardshaping}
\end{figure}

We elaborate the details by following the execution sequence in MAAL.
At the time step $t$ and state $s_t$, agent $A^i$ first identifies $A^{-i}$'s goal via plan recognition~(signified by the green block labeled $\mathcal{I}^{i}$). The sub-model $\mathcal{I}^{i}$ computes the probability distribution $\hat{\textbf{b}}^{-i}_{t}$~(indicated by the gold circle) over potential targets of $A^{-i}$ based on observed states and actions. 
Following this recognition, $A^i$ assesses $A^{-i}$'s beliefs through another plan recognition process (denoted as $\mathcal{P}^{i}$). The sub-model $\mathcal{P}^{i}$ estimates the belief probability distribution $\hat{\textbf{b}}^{i}_{t}$ (the gold circle) concerning $A^{i}$'s targets from $A^{-i}$'s perspective. Then, agent uses the KL-divergence Gain~(KLG, the pink block) to evaluate the legibility of action $a^i_{t-1}$. With the legibility incorporation, the agent receives a shaped reward $\Tilde{r}^{i}_{t}$  used in the policy update of $A^i$.  

Complementing Fig.~\ref{fig:legibility_rewardshaping}, we present the MAAL framework for $N$ agents in Alg.~\ref{alg:maal}.
Lines 2-3 initialize the system at the start of each episode. Agents establish their goals and reset their beliefs about other agents' goals, assuming a uniform distribution over the each potential goals. At every time step $t$, the agent assesses other agents' goal beliefs using Eq.~\ref{eq:i} and~Eq.\ref{eq:update_belief} or alternative mapping methods~(Lines 6-9). It then estimates other agents' predictions of its own goals using Eq.~\ref{eq:p} (Line 10). The agent subsequently generates an action based on the policy $\pi^i(o^i_{t-1}\odot \textbf{g}^i \odot\textbf{b}^{-i}_{t-1})$~(Line 11). Line 12 outlines the environmental state transition with their joint action and feedback of reward signals.
Subsequently, the agent adjusts the original rewards. 
Specifically, $A^i$ calculates $\Delta D_{KL}$ (Line 20) and derives the legibility reward $\Tilde{r}^i_{t-1}$~(Line 21). 
Finally, $A^i$ stores the tuple \([(o^i_{t-1}\odot \textbf{g}^i \odot\textbf{b}^{-i}_{t-1}), a^i_{t-1}, (o^i_{t}\odot \textbf{g}^i \odot\textbf{b}^{-i}_{t}), \tilde{r}^i_{t-1}]\) in the replay buffer for single-agent reinforcement learning~(Line 22).


In conclusion, the MAAL framework facilitates mutual understanding and cooperation among multiple agents by enabling them to recognize and adapt to others' goals. Through a combination of plan recognition and belief estimation, agents can effectively collaborate towards shared objectives while enhancing the legibility of their actions for improving their performance.

\begin{algorithm}[htbp]
\SetAlgoLined
\caption{Multiagent Active Legibility}
\label{alg:maal}
\normalem
\KwIn{
$N$ agents: $\{A^1,...,A^N\}$, 
$N$ LI-POMDP: $\{\mathcal{M}^1,...,\mathcal{M}^N\}$,
Legibility weight: $\beta\in\mathbb{R}^+$, 
Maximum train episodes: $M\in\mathbb{N}^+$, 
Maximum episode steps: $L\in\mathbb{N}^+$}
\KwResult{N agent policies: $\pi^{1},...,\pi^{N}$}
\While{$episodes \leq M$ }{   
    Reset the environment to $s_0$\;
    Each agent is assigned with a goal: $g^i,i=1,...,N$\;
    \While{$t \leq L$}
    {
        \For{Agent {$A^i \in\{A^1,...,A^N\}$}}
        {
             
             \For{Agent {$A^j \in\{A^1,...,A^N\}\setminus \{A^i\}$}}
            {
             Estimate the belief $\hat{\textbf{b}}^{j}_{t}$ over the goals of agent $A^{j}$ 
             based on the observation $ o^{i}_{t},\Tilde{a}^{j}_{t-1}$ via~Eq.\ref{eq:update_belief}
              }
            Update the overall belief $\hat{\textbf{b}}^{-i}_{t}$
            via~Eq.\ref{eq:i} \\
            Update the belief $\hat{\textbf{b}}^{i}_{t}$ 
            via~Eq.\ref{eq:p}  
   
            Choose the action via $\varepsilon$-greedy\cite{Sutton} algorithm\\
            $ a^{i}_{t} \leftarrow \pi^i(o^{i}_t \odot \textbf{g}^i\odot \hat{\textbf{b}}^{-i}_{t})$\;
        }
        Environment takes the joint action $[a^0_t,...a^N_t]$, transit to the next state, and feedback the rewards $[r^1_t,...r^N_t]$\;
       
            \For{Agent {$A^i \in\{A^1,...,A^N\}$}}
            {
            Calculate the KL-divergence Reduction:\\
             \If{$t = 0$}{
                 $\Delta D_{KL}(a^i_{t-1}) \leftarrow 0$
                }
            \Else{
                $\Delta D_{KL}(a^i_{t-1}) \leftarrow  D_{KL}(\hat{\textbf{b}}^{i}_{t-1}||\textbf{g}^i)-D_{KL}(\hat{\textbf{b}}^{i}_{t}||\textbf{g}^i)$ 
            }
            Calculate the legibility reward: $\Tilde{r}^i_{t-1} \leftarrow  r^i_{t-1}+{\beta}\Delta D_{KL}(a^i_{t-1})$
            
            Update the policy $\pi^i$ with tuple \([(o^i_{t-1}\odot \textbf{g}^i \odot\textbf{b}^{-i}_{t-1}), a^i_{t-1}, (o^i_{t}\odot \textbf{g}^i \odot\textbf{b}^{-i}_{t}), \tilde{r}^i_{t-1}]\)\; 
            
        }
        $t \leftarrow  t + 1$\;
    }
    $episodes \leftarrow  episodes + 1$\;}
    
\end{algorithm}

\section{Convergence Analysis}
\label{canalysis}
In this section, we first define the completeness in Definition \ref{def:complete}, and analysis the MAAL reward shaping from the perspective of a single agent. Specifically, we examine if an agent can reach a sub-goal state under the original reward function, and if, using MAAL for the reward shaping, it satisfies Proposition~\ref{rs}, ensuring that the agent with enhanced legibility can also reach the target state. We first define the completeness of a policy $\pi$ below.

\begin{definition}
\label{def:complete}
Let $s^0$ represent the initial state, $s^*$ the goal (absorbing) state, and $P_{\pi}^{t}(s^i,s^j)$ denote the probability of an agent reaching $s^j$ after $t$ steps transitions from $s^i$. Then a policy $\pi$ is complete, if for every initial state $s^0$, there exist a finite time $t$ such that $P_{\pi}^{t}(s^0,s^*)=1$.
\end{definition}
\begin{proposition}
Let any agent $A^i$ with a target sub-goal $g^i$ be given.
The necessary condition for applying the shaping reward by MAAL algorithm to agent $A^i$ without compromising the completeness of the policy as defined in Definition \ref{def:complete} is: 
at any two time $t_1$ and $t_2$, when the agent $A^i$ is in the same state $s$, the observing agent's estimation of agent $A^i$'s goal must be consistent, i.e. $\hat{\textbf{b}}_{t_1}^i=\hat{\textbf{b}}_{t_2}^i$
\label{rs}
\end{proposition}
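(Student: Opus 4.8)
The plan is to recognise the MAAL shaping term as a \emph{potential-based} reward-shaping term, for which completeness is preserved by a classical invariance argument, and then to show that the stated consistency condition is exactly what makes this recognition legitimate; the reverse (necessity) direction is handled by a profitable-cycle counterexample when the condition fails.

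First I would turn the hypothesis into a state potential. The shaped reward in Eq.~\ref{reward_shaping} depends, through Eq.~\ref{eq:delta_Dkl}, on the observer estimates $\hat{\textbf{b}}^i_{t-1}$ and $\hat{\textbf{b}}^i_{t}$; the condition $\hat{\textbf{b}}^i_{t_1}=\hat{\textbf{b}}^i_{t_2}$ whenever $A^i$ revisits a state $s$ says precisely that $s\mapsto\hat{\textbf{b}}^i$, and hence $s\mapsto D_{KL}(\hat{\textbf{b}}^i(s)\|\mathbf{g}^i)$, is a well-defined function of state (finite on the set of beliefs the observer can actually hold; at the absorbing goal $s^*$ the observer has identified $g^i$, so this quantity is smallest there). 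Setting $\Phi(s):=-D_{KL}(\hat{\textbf{b}}^i(s)\|\mathbf{g}^i)$, the increment becomes $\beta\,\Delta D_{KL}(a^i_{t-1})=\beta\big(\Phi(s_t)-\Phi(s_{t-1})\big)$, so that $\tilde r^i_{t-1}=r^i_{t-1}+\beta\Phi(s_t)-\beta\Phi(s_{t-1})$: the MAAL shaping is a potential-based shaping of $R^i$ (with discount $\gamma=1$, i.e. the episodic absorbing case; for $\gamma<1$ one uses the standard $\gamma\Phi(s_t)-\Phi(s_{t-1})$ form and carries the $(\gamma-1)\Phi$ term). Crucially, without the condition $\tilde r^i$ is not even a function of the reduced MDP state $(o^i\odot\textbf{g}^i\odot\textbf{b}^{-i})$, so the reduction to a single-agent MDP used in Section~\ref{sec_method} — and hence the usual completeness guarantees — no longer applies.

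Next I would invoke the potential-based reward-shaping invariance result, or argue it directly: telescoping the increment along any run that starts at $s^0$ and absorbs in $s^*$ at step $T$ gives $\sum_t\tilde r^i_t=\sum_t r^i_t+\beta\big(\Phi(s^*)-\Phi(s^0)\big)$, an additive constant independent of the run. Hence the shaped objective orders all goal-reaching policies exactly as $R^i$ does, while any policy that never reaches $s^*$ keeps paying the original per-step cost and stays strictly dominated; so an optimal policy of the MAAL-shaped problem is still complete in the sense of Definition~\ref{def:complete}, i.e. completeness is not compromised. For the converse, suppose the condition fails: there are times $t_1<t_2$ and a state $s$ visited at both with $D_{KL}(\hat{\textbf{b}}^i_{t_1}\|\mathbf{g}^i)\neq D_{KL}(\hat{\textbf{b}}^i_{t_2}\|\mathbf{g}^i)$, so the shaping accumulated along the loop from $s$ back to $s$ telescopes to the nonzero value $\beta\big(D_{KL}(\hat{\textbf{b}}^i_{t_1}\|\mathbf{g}^i)-D_{KL}(\hat{\textbf{b}}^i_{t_2}\|\mathbf{g}^i)\big)$; I would embed a minimal gadget realising this gap with a positive sign and a per-loop original cost set below it, so that the $\tilde r^i$-optimal behaviour cycles in the gadget and $P^t_{\pi}(s^0,s^*)<1$ for every finite $t$, destroying completeness. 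This shows the condition is necessary.

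The main obstacle is making this necessity construction airtight. Under the Bayesian update of Eq.~\ref{eq:update_belief} the divergence $D_{KL}(\hat{\textbf{b}}^i(\cdot)\|\mathbf{g}^i)$ tends to move monotonically toward its minimum and is bounded, so the per-loop gain cannot stay bounded away from zero and the total bonus over infinitely many loops is finite; the gadget must therefore exploit the freedom the framework grants in choosing $\mathcal{I}^i$ and $\mathcal{P}^i$ (for instance a recurrent estimator whose output genuinely depends on the history of arrivals at $s$), or one settles for the weaker — but arguably intended — statement that without the condition the completeness guarantee simply cannot be established because the shaped problem is no longer a bona fide MDP. A secondary, routine point is the $\gamma$-versus-undiscounted bookkeeping in the potential-based step, which the episodic absorbing setting with $\gamma=1$ sidesteps.
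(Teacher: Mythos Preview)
Your proposal and the paper's proof share the same core: both reference Ng et al.'s reward-shaping analysis and examine the shaping reward accumulated along a loop $s^0 \to \cdots \to s^n \to s^0$, telescoping the $\Delta D_{KL}$ terms to isolate the boundary contribution $D_{KL}(\hat{\textbf{b}}^i_{t=0}\|\mathbf{g}^i) - D_{KL}(\hat{\textbf{b}}^i_{t=n+1}\|\mathbf{g}^i)$. The paper carries this computation out directly---writing out the discounted sum, using $\gamma\in(0,1]$ to turn the partial cancellations into an inequality, and concluding that the bound vanishes exactly when the observer's estimate at the repeated state coincides. You instead name the potential $\Phi(s)=-D_{KL}(\hat{\textbf{b}}^i(s)\|\mathbf{g}^i)$ explicitly and invoke the potential-based shaping invariance result, which is a cleaner abstraction of the same telescoping and makes the sufficiency direction immediate. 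Your treatment of necessity via an explicit profitable-cycle gadget is also more careful than the paper's, which stops at the observation that a non-vanishing boundary term allows $R_{KLG}>0$ without actually constructing a failure instance; and your remarks on the $\gamma$-bookkeeping and on the Bayesian-monotonicity obstacle to sustaining a per-loop gain go beyond what the paper addresses. In short, the underlying argument is the same loop-telescoping idea, but your packaging as potential-based shaping is tighter and your necessity direction more fully worked out.
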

\begin{proof}
We refer to the convergence analysis of reward shaping~\cite{ng1999policy}, which reveals that one way an agent’s policy is incompleteness when agents repeatedly visit non-goal states in pursuit of shaping rewards, thereby hindering task completion.
To further clarify it, we introduce a straightforward example to illustrate the potential issues arising from the improper reward shaping in multi-agent systems.
\begin{figure}[h!]
    \centering
    \subfigure[Unshaping Reward]{\includegraphics[width=0.45\linewidth]{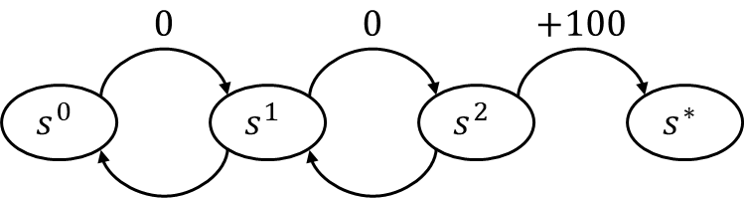}
    \label{fig:unshaping}}
    \subfigure[Shaping Reward]{\includegraphics[width=0.45\linewidth]{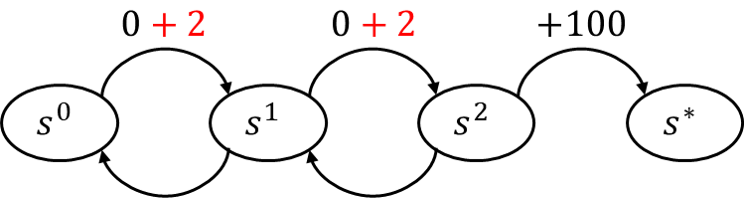}
    \label{fig:shaping}}
    \caption{An example of Reward Shaping in MDP}
    \label{fig:reward shaping example}
\end{figure}
Fig.\ref{fig:reward shaping example} illustrates a scenario where reward shaping is used to improve the agent's exploration efficiency. Although the purpose of using reward shaping in this context differs from this paper (where MAAL uses reward shaping to directly alter the final policy), it is still highly valuable for analyzing completeness.

Fig.\ref{fig:unshaping} illustrates a MDP consisting of four states: \{\( s^0 \), \( s^1 \), \( s^2 \), and \( s^* \)\}, in which the \( s^0 \) is the initial state, and the agent's objective is to reach the target state \( s^* \). The reward function for each transition is indicated above the arrows. As shown, the agent receives the reward of +100 only upon making the final transition. Hence, the agent may require extensive exploration and trials to discover the state transition that completes the task.
To enhance the agent's exploration efficiency, Fig.~\ref{fig:shaping} introduces an additional reward of $+2$~(red numbers over the arrow) for the transitions \( s^0 \rightarrow s^1 \) and \( s^1 \rightarrow s^2 \) on top of the original reward function. This adjustment is intended to guide the agent towards states \( s^1 \) and \( s^2 \), thereby improving its exploration efficiency. 
However, this reward function introduces a new problem: if the MDP has an infinite duration, the agent can continuously cycle between \( s^0 \), \( s^1 \), and \( s^2 \) to accumulate rewards that exceed the reward for reaching the target state \( s^* \) via \( s^0 \rightarrow s^1 \rightarrow s^2 \rightarrow s^* \). In such a scenario, the agent accumulates significant rewards but fails to achieve the initial goal, thus losing the completeness of the algorithm.

Back to the transition loop with the legibility reward shaping in MAAL. Assume agent \( A^i \) has the goal \( g^i \). Starting from the state \( s^0 \) at time \( t = 0 \), the agent reaches the state \( s^n \) after \( n \) time steps and then returns to \( s^0 \) at time \( t = n+1 \), forming a loop as: $s^0_{t=0} \rightarrow ... \rightarrow s^n_{t=n} \rightarrow s^0_{t=n+1}$. Then the discounted return received by agent \( A^i \) is~(assuming the legibility weight \( \beta = 1 \) for the simplicity):
\begin{equation}
\begin{aligned}
    R&=\sum_{k=0}^{n+1}
    \gamma^k r^i_k+D_{KL} (\hat{\textbf{b}}_{t=0}^i || g^i) - D_{KL}(\hat{\textbf{b}}_{t=1}^i || g^i)+\gamma (D_{KL}(\hat{\textbf{b}}_{t=1}^i || g^i) - \\
    & D_{KL} (\hat{\textbf{b}}_{t=2}^i || g^i))+\gamma^2(D_{KL}(\hat{\textbf{b}}_{t=2}^i || g^i) -D_{KL}(\hat{\textbf{b}}_{t=3}^i || g^i)) + ... + \\
    & \gamma^n (D_{KL}(\hat{\textbf{b}}_{t=n}^i || g^i) -D_{KL}(\hat{\textbf{b}}_{t=n+1}^i || g^i))
\end{aligned}
\end{equation}
Compared to the original reward signal $\sum_{k=0}^{n+1}{\gamma^k r^i_k}$, MAAL provides an additional reward signal $R_{KLG}$ to agent \( A^i \):
\begin{equation}
    \begin{aligned}
        R_{KLG} &= R - \sum_{k=0}^{n+1}{\gamma^k r^i_k} \\
        &=D_{KL} (\hat{\textbf{b}}_{t=0}^i || g^i) - D_{KL}(\hat{\textbf{b}}_{t=1}^i || g^i)+\gamma (D_{KL}(\hat{\textbf{b}}_{t=1}^i || g^i) - \\
    & D_{KL} (\hat{\textbf{b}}_{t=2}^i || g^i))+\gamma^2(D_{KL}(\hat{\textbf{b}}_{t=2}^i || g^i) -D_{KL}(\hat{\textbf{b}}_{t=3}^i || g^i)) + ... + \\
    & \gamma^n (D_{KL}(\hat{\textbf{b}}_{t=n}^i || g^i) -D_{KL}(\hat{\textbf{b}}_{t=n+1}^i || g^i))
    \end{aligned}
\end{equation}
Since the discount $\gamma \in (0,1]$, we have:
\begin{equation}
    \begin{aligned}
        R_{KLG} &\leq D_{KL} (\hat{\textbf{b}}_{t=0}^i || g^i) \underbrace{- D_{KL}(\hat{\textbf{b}}_{t=1}^i || g^i)+ D_{KL}(\hat{\textbf{b}}_{t=1}^i || g^i)}_{= 0} \\ 
    & \underbrace{ - D_{KL} (\hat{\textbf{b}}_{t=2}^i || g^i)+D_{KL}(\hat{\textbf{b}}_{t=2}^i || g^i)}_{=0} \underbrace{-D_{KL}(\hat{\textbf{b}}_{t=3}^i || g^i) +D_{KL}(\hat{\textbf{b}}_{t=3}^i || g^i)}_{=0} +... + \\
    & \underbrace{-D_{KL}(\hat{\textbf{b}}_{t=n}^i || g^i)+D_{KL}(\hat{\textbf{b}}_{t=n}^i || g^i)}_{=0} -D_{KL}(\hat{\textbf{b}}_{t=n+1}^i || g^i)
    \end{aligned}
\end{equation}
Therefore, we can derive an upper bound for \( R_{KLG} \) below.
\begin{equation}
     R_{KLG} \leq \underbrace{D_{KL} (\hat{\textbf{b}}_{t=0}^i || g^i) - D_{KL} (\hat{\textbf{b}}_{t=n+1}^i || g^i)}_{=0}
\end{equation}
 where \( \hat{\textbf{b}}_{t=0}^i \) represents the prediction distribution of other agents for \( A^i \) at time \( t=0 \) when it is in the state \( s^0 \). 
And \( \hat{\textbf{b}}_{t=n+1}^i \)  is the prediction distribution of other agents for \( A^i \) at time \( t=n+1 \) when it is in state \( s^0 \) and has the trajectory: $s^0_{t=0} \rightarrow ... \rightarrow s^n_{t=n} \rightarrow s^0_{t=n+1}$.

In summary, to ensure that the agent $ A^i $ receives a non-positive reward $ R_{KLG} \leq 0 $, 
the observer's model must have the same goal estimation of $A^i$ in one state: $D_{KL} (\hat{\textbf{b}}_{t=0}^i || g^i) = D_{KL} (\hat{\textbf{b}}_{t=n+1}^i || g^i)$
\end{proof}

\section{Experimental Results}
\label{sec_exp}

This section evaluates the MAAL performance on specific problems~(emphasizing goal identification between agents) through comparative experiments and ablation studies. 
In Section~\ref{sec_lead-follow-maze}, we first introduce the Lead-Follow Maze~(LFM), a discrete maze scenario with two agents and four exits. 
To diversify the experimental environments, we propose the {\it simple navigation} scenario in Section~\ref{sec_Particle_Simple_Navigation}, which expands upon the simple spread task from the {\it particle} environment, incorporating more agents and landmarks. The simple navigation, with its continuous state-action space, allows for a more comprehensive evaluation of MAAL in complex settings, testing its ability to enhance action legibility and improve multi-agent system performance.
\subsection{Lead-Follow Maze}
\label{sec_lead-follow-maze}
\begin{figure}[h]
    \centering
\includegraphics[width=0.6\linewidth]{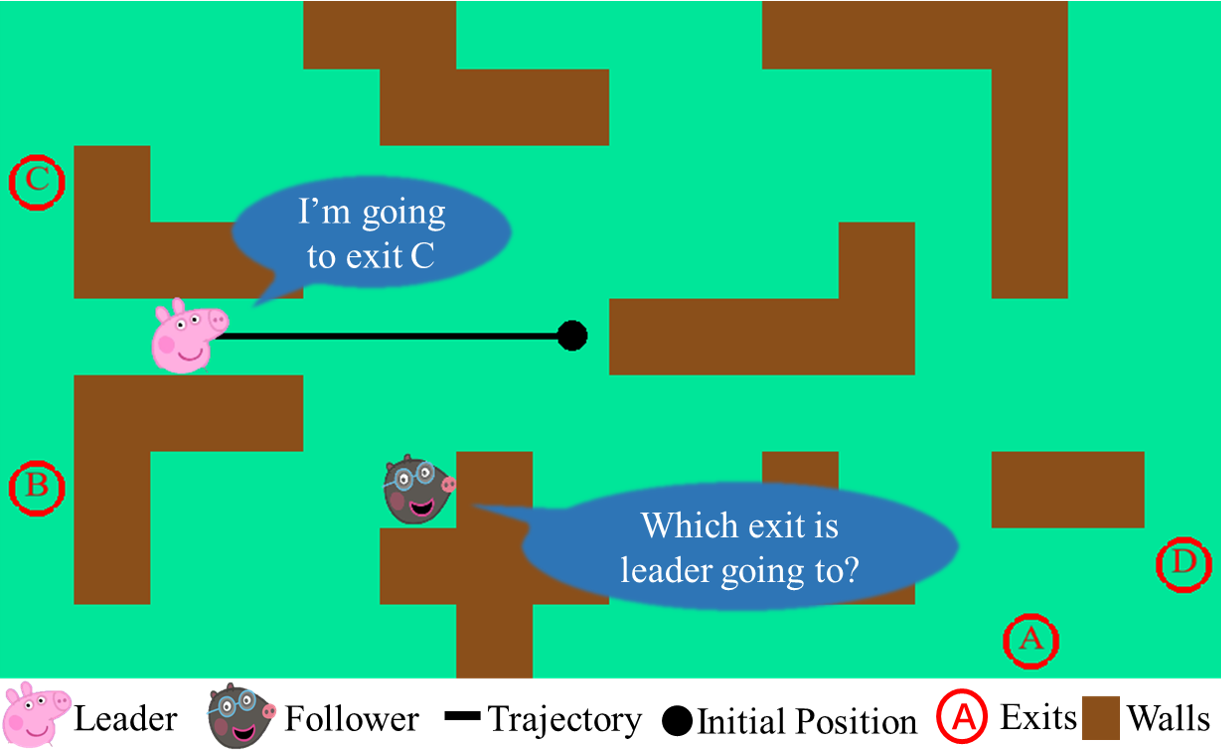}
    \caption{The Lead-Follow Maze domain: 
    In a $10\times 16$ maze, there are two independent agents, the leader and the follower. Both of them can observe each other's actions and trajectories. 
    There are four exits distributed around the maze, as well as several walls. 
    At the beginning of each game, the leader is assigned a target exit, which is unknown to the follower. 
    The objective of the game is for both agents to reach their respective target exits simultaneously, thereby ending the game and achieving victory. Therefore, the follower needs to observe the leader's trajectory, infer the true target exit, and hurry to it. Meanwhile, the leader can also enhance the legibility of its actions and strategies to help the follower identify its target more quickly and accurately. }
\label{fig: domain}
\end{figure}

\subsubsection{Experimental Settings}
Since both the state and action spaces are discrete, the leader and follower employ Q-Learning to learn their policies.
In the LFM environment, the state space for both agents is defined as $\mathcal{S}$, which represents the coordinates of the leader and follower. For example, as shown in Fig.~\ref{fig: domain}, $s=((2,4),(5,2))$, where $(2,4)$ indicates the coordinate of the leader, and $(5,2)$ represents the coordinate of the follower.
For the Leader agent, the policy is fed with $[\mathcal{S} \odot g^*]$, where $s$ is the state and $g^*$ represents the target exit: $a^L \leftarrow argmax_{a' \in \mathcal{A}}Q^L([\mathcal{S} \odot g^*], a') $. 
As to the follower agent, its input is defined as $[\mathcal{S}\odot \hat{g}]$, where $\hat{g}$ is the follower's estimation of the true target, obtained from the Bayesian learning as Eq.~\ref{eq:update_belief}. At the end of the episode, the follower utilizes the observation of leader's trajectory and the true target to update its plan recognition through parameter learning. 

The action space for both agents is $\mathcal{A} = \left\{up, down, left, right, stay\right\}$, where each action moves one grid, and the state transition is deterministic. 
When the agents complete the task, they both receive a reward of $+1$. For each grid it moves, the agent receives the motion cost of $-0.1$.

\subsubsection{Comparative Experiments}
In comparative experiments, we select a series of the MARL algorithms compared to the MAAL approach in the LFM domain.
\begin{itemize}
    \item Independent Q-Learning~(IQL)~\cite{tan1993multi} is an extension of the Q-Learning algorithm for multiagent settings. Multiple agents are trained independently, each with its own policy. Agents interact with the environment and learn to maximize their individual expected rewards. 
    \item Value-Decomposition Networks~(VDN)~\cite{sunehag2017value} assumes that the global value can be represented as the sum of the individual values of each agent and aims to capture the interdependencies between agents by decomposing the global value function.
    \item QMIX~\cite{rashid2020weighted} employs deep neural networks to learn the value functions and the mixing network, allowing for more expressive representations and approximation of complex value functions compared to VDN.
    \item Multiagent Variational Exploration~(MAVEN)~\cite{mahajan2019maven} is an improved algorithm of the QMIX that overcomes the low exploration efficiency due to monotonicity constraints.
    \item  Mutil-Agent Deep Deterministic Policy Gradient~(MADDPG) \cite{lowe2017multi} is an extension of the Deep Deterministic Policy Gradient (DDPG), where each agent maintains its local actor network to make decisions, and learning policy from a centralized critic network taking the joint action and observation of all the agents as inputs.
\end{itemize}

We have run each algorithm for 5 to 6 times and plot both the mean~(with a curve) and standard deviation~(with a shallow shadow). 

\begin{figure}[h]
\centering
\includegraphics[width=0.6\linewidth]{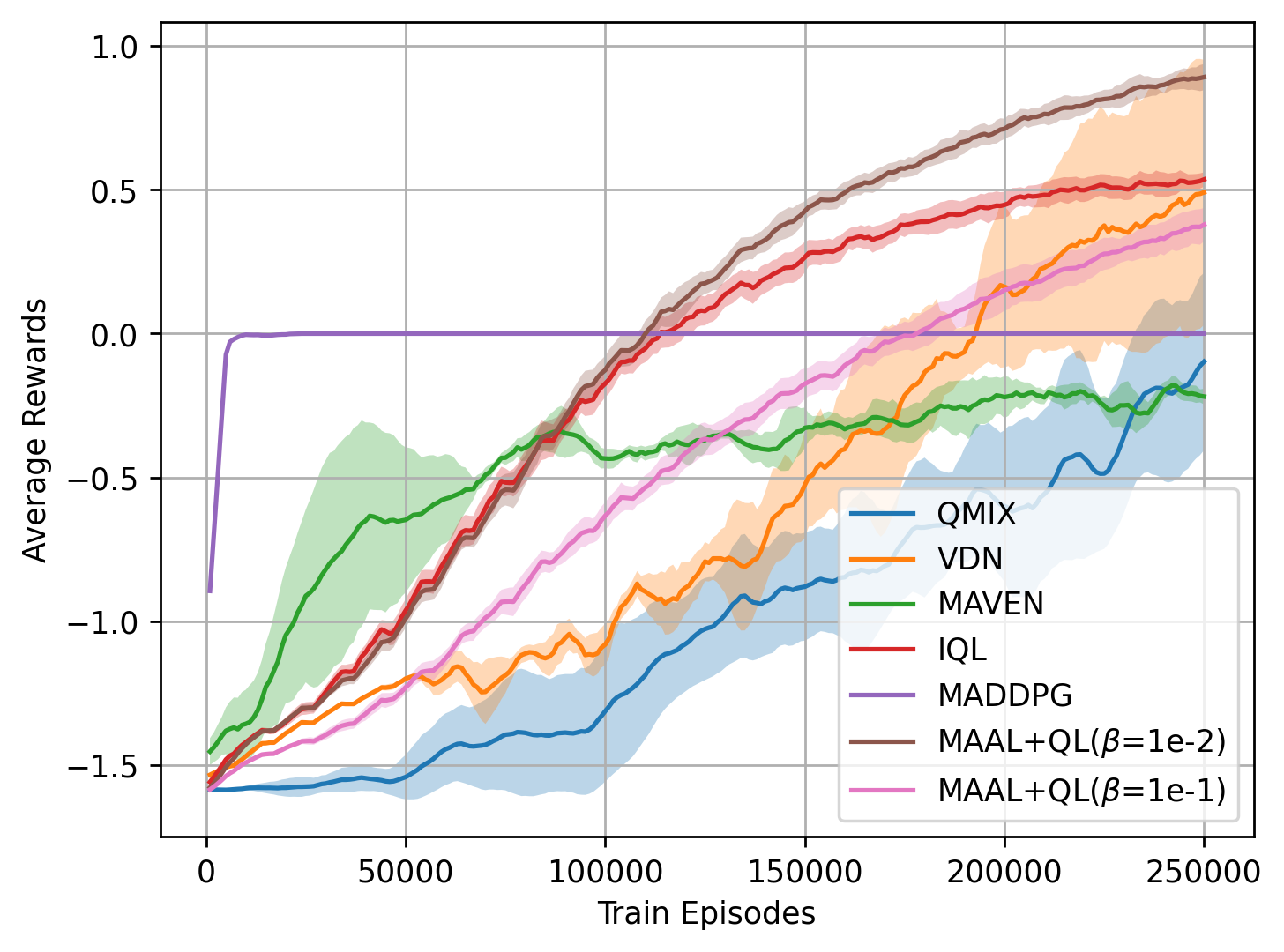}
\caption{Episode Reward in LFM}
\label{fig:sum_reward}
\end{figure}

Figure~\ref{fig:sum_reward} presents the average rewards for the various algorithms, excluding shaped rewards. The results show that, with the exception of MADDPG, all the algorithms exhibit varying degrees of performance improvements in the post-training. The MADDPG algorithm~(purple curve) fails to converge, likely due to its inadequate adaptation to discrete state and action spaces.
Interestingly, despite being one of the simplest algorithms, the IQL algorithm (red curve) performs remarkably well in this environment. Similarly, the VDN algorithm (orange curve) demonstrates good performance; however, it suffers from high variance and significant uncertainty. Although QMIX (blue curve) is designed as an improved version of VDN, it performs poorly in this environment, significantly lagging behind VDN. This discrepancy may arise from the difficulty of training the non-linear combinations in QMIX, which do not adapt well to the environment. A similar pattern is observed with the MAVEN algorithm (green curve). While it shows rapid improvement in the initial stage, its final performance only matches that of QMIX.

On the other hand, the MAAL+QL algorithm achieves the best performance in this environment. When the legibility weight $\beta$ is set to 0.01, MAAL+QL (brown curve) outperforms all comparative algorithms. This outcome suggests that enhancing the legibility of an agent's policy improves the speed and accuracy of intention recognition between agents, thereby boosting collaborative performance. However, when $\beta$ is increased to 0.1, the performance of MAAL+QL declines significantly. The higher reward shaping weight likely causes agents to overemphasize intention expression (i.e., improving legibility) at the expense of task completion efficiency.

\subsubsection{Legibility's Impact Experiment}
\label{sec:exp2}

We conducted comparative experiments with different legibility weight $\beta$ under the same initial conditions. This experiment focused on two metrics: Prediction Correctness Ratio~(PCR) and Prediction Time Ratio~(PTR). PCR refers to the accuracy of follower's predictions of leader' goals. For example, if within training episodes 1000 to 2000, the follower correctly predicted the leader's goal in 415 out of 1000 episodes, then the prediction correctness ratio at Episode 2000 is $415/1000 = 0.415$. PTR refers to the ratio between the number of steps required for an agent to correctly predict other agents' goals from the beginning of the episode to the length of the episode. A smaller PTR means that the agent can accurately predict other agents' goals earlier, suggesting higher legibility. For instance, if an observer correctly predicts the goal at Step 15 and maintains it to the end of the episode, with the total episode length being 50 steps, then the prediction time ratio for that episode is $15/50 = 0.3$.

\begin{figure}[h]
\centering
\subfigure[PCR with different $\beta$ values.]{
\includegraphics[width=0.45\linewidth]{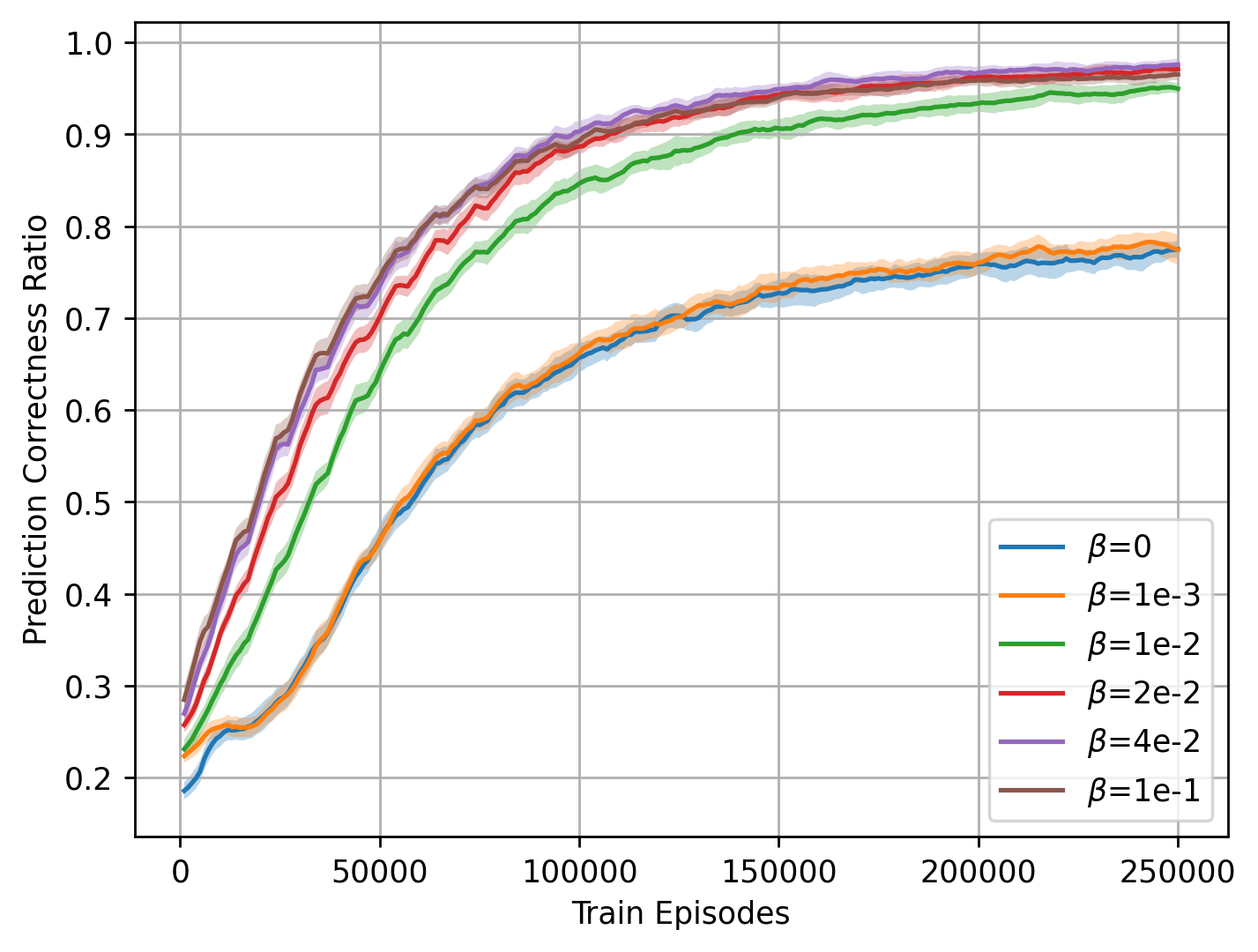}
\label{fig: Prediction Correctness Ratio}
}
\subfigure[PRT with different $\beta$ values.]{\includegraphics[width=0.45\linewidth]{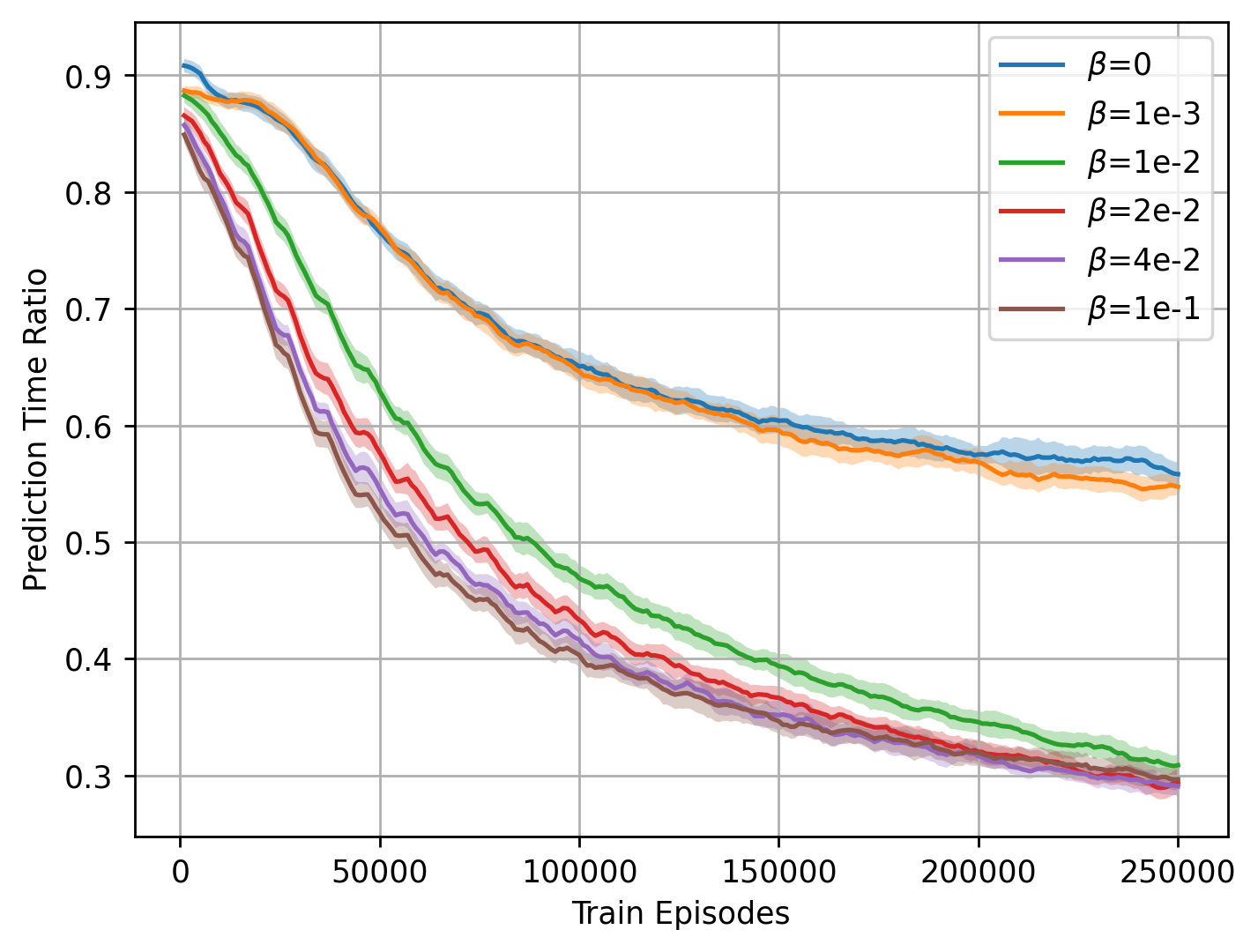}
\label{fig: Prediction Time Ratio}}

\end{figure}

We show the PCR in Fig.~\ref{fig: Prediction Correctness Ratio}~and PTR in Fig.~\ref{fig: Prediction Time Ratio} with different $\beta$ values. 
It is noticed that the monotonic increase in PCR and decrease in PTR happen as $\beta$ grows, which strongly indicates the improved legibility of the {subject} agent's behaviors. 
When the legibility is not applied, the follower's PCR is less than $70\%$, and requires nearly half of the journey before identifying the true goal of subject agent. 
Subsequently, as $KLG$ exerts more influence on the reward signal, PCR climbs to nearly $100\%$, and the PTR is almost halved. 
\subsubsection{MAAL Beyond Q-Learning}
\label{sec:exp3}
As we mentioned before, MAAL stands upon the standard MDP and thus can be integrated into any single-agent reinforcement learning algorithm by solving MARL problems. 
We empirically study the legibility application in two different reinforcement learning algorithms.
In this experiment, we incorporate MAAL with the State Action Reward State Action~(SARSA) and Deep Q-Network~(DQN) methods respectively, denoted as MAAL+SARSA and MAAL+DQN. 
To evaluate whether the legibility is helpful for multiagent learning, we set the parameter $\beta=0$ in one of the experiments, i.e. removing the legibility weight in the reward function.

\begin{figure}[h]
    \centering
    \subfigure[Success Rate of MAAL+SARSA]{\includegraphics[width=0.31\linewidth]{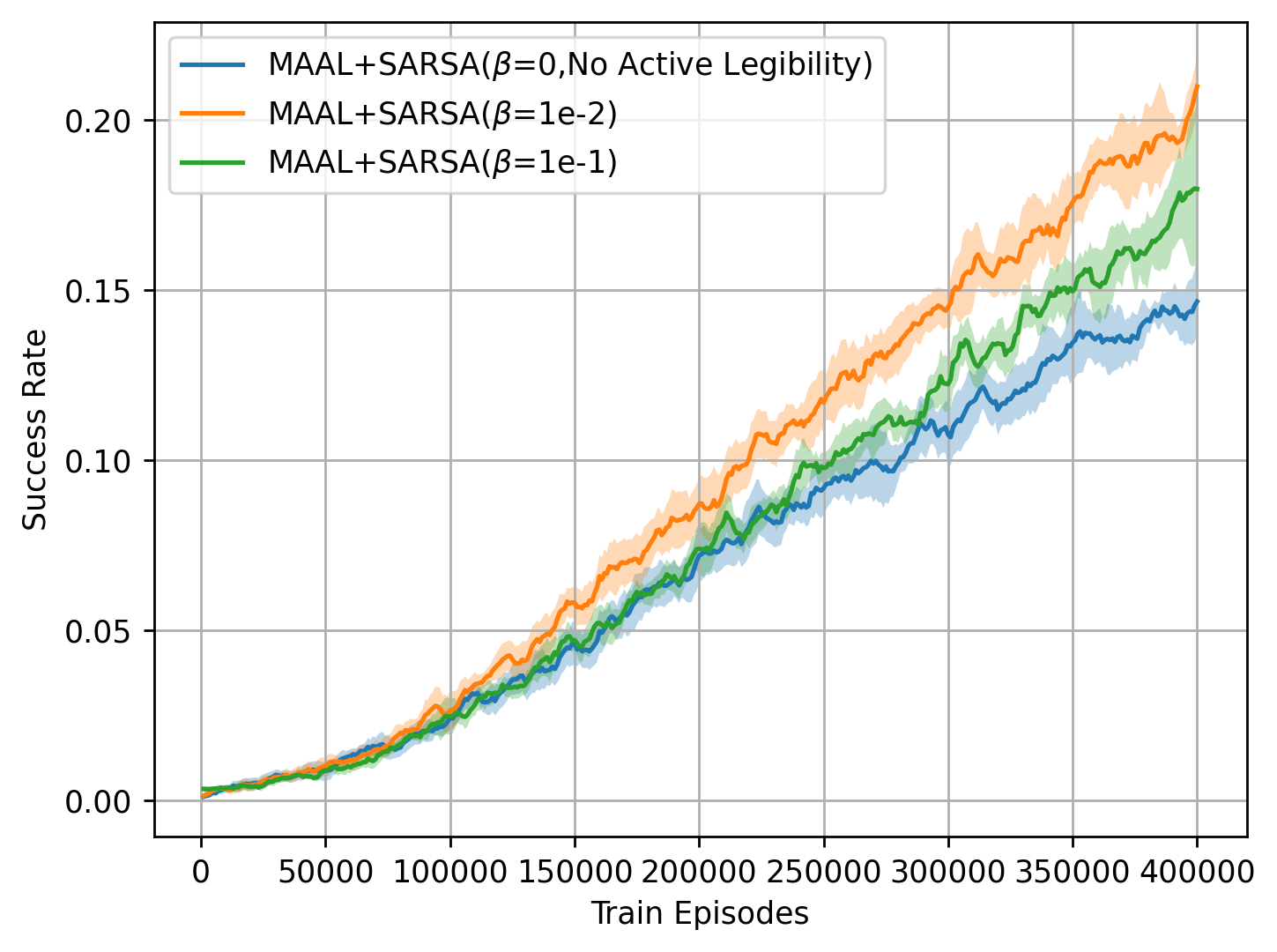}
    \label{fig:SARSA success_rate}}
    \subfigure[PCR of MAAL+SARSA]{\includegraphics[width=0.31\linewidth]{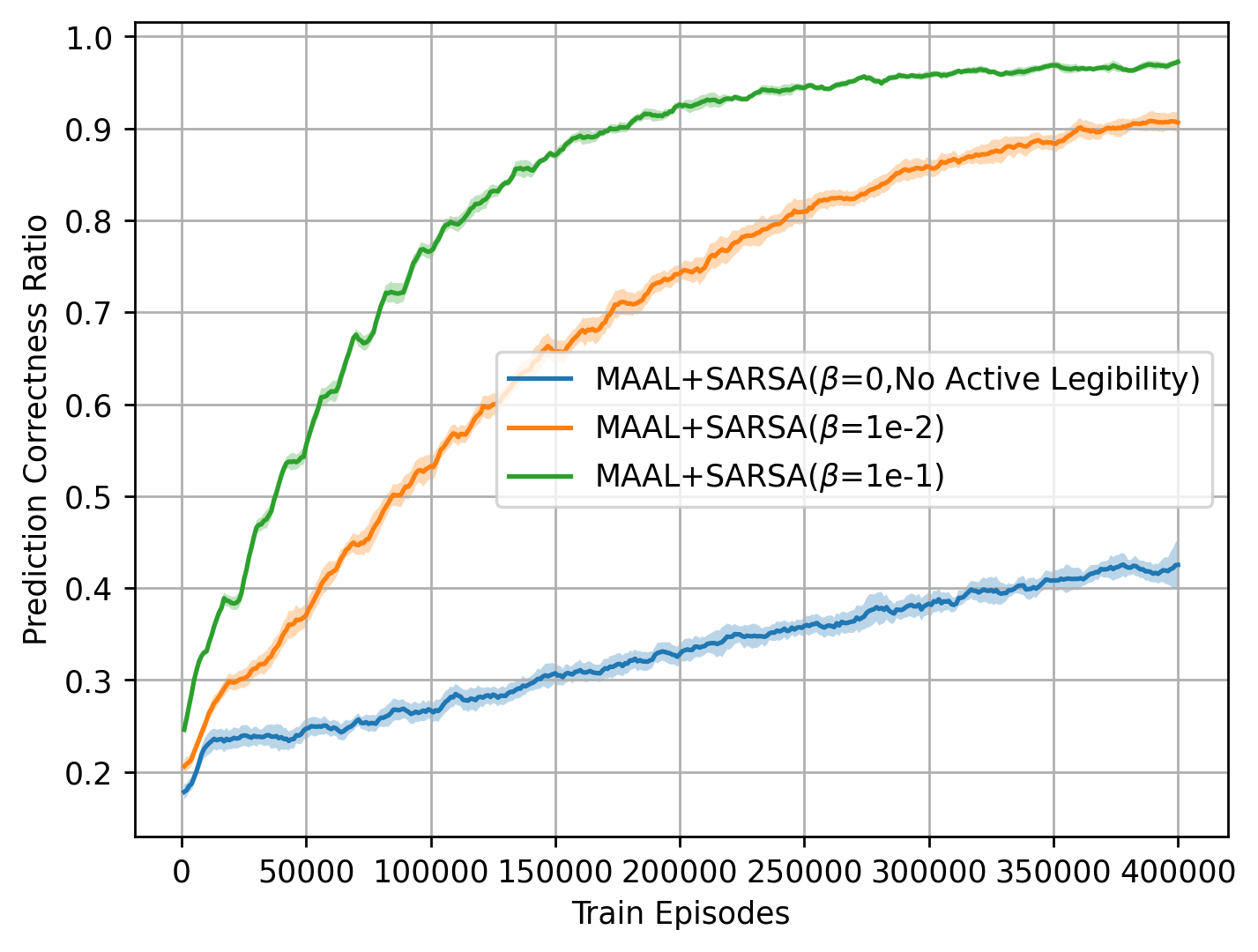}
    \label{fig:SARSA Prediction Correctness Ratio}}
    \subfigure[PTR of MAAL+SARSA]{\includegraphics[width=0.31\linewidth]{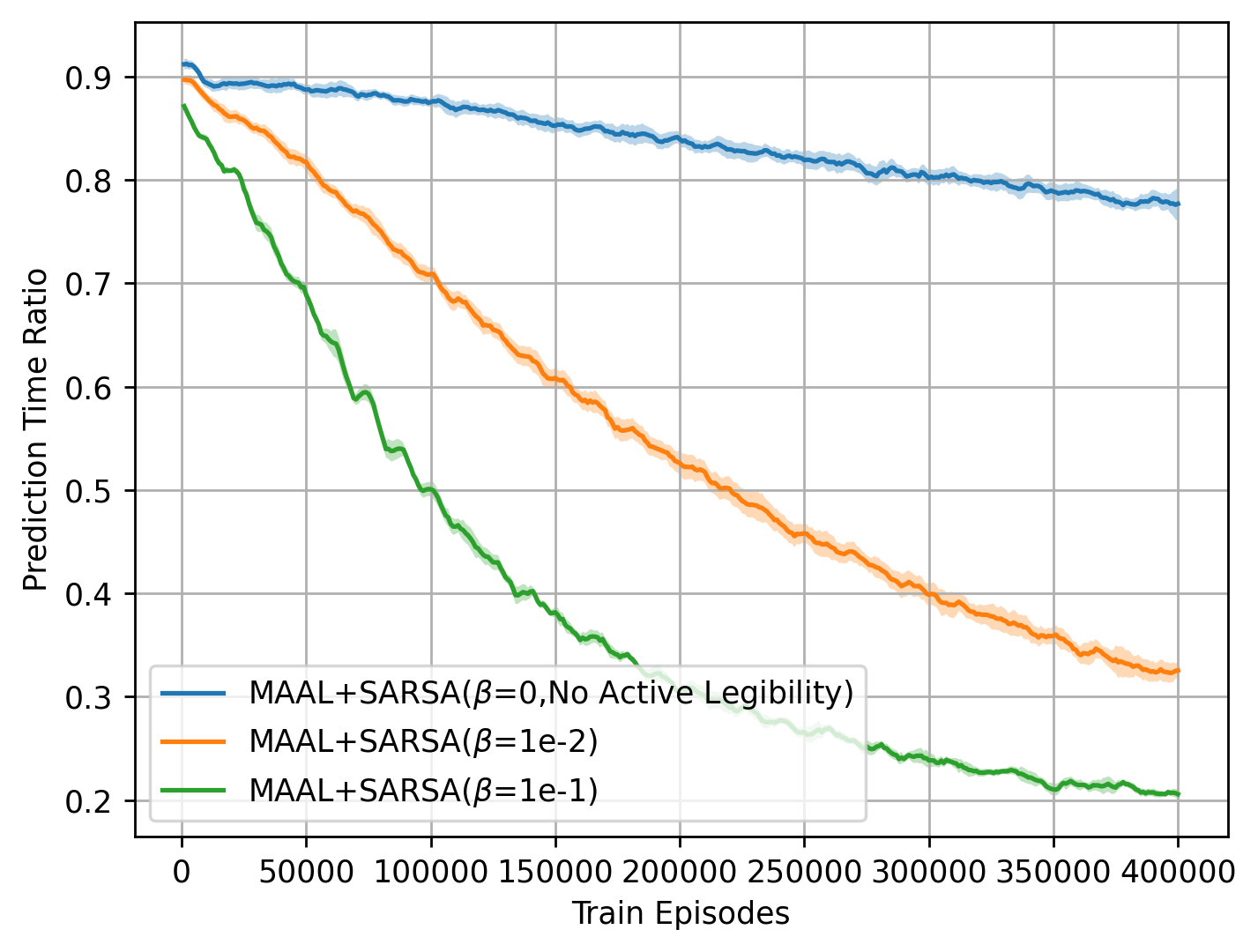}
    \label{fig:SARSA Prediction Time Ratio}}\\
    \caption{Performance of the MAAL with SARSA}
    \label{fig:Combine MAAL with SARSA}
\end{figure}
{Fig.~\ref{fig:SARSA success_rate} has shown, in MAAL+SARSA, the success rate with certain legibility~(orange curve) reaches only $20\%$, but is still superior to those without legibility~(blue curve) in $15\%$. Improving the weight of legibility~(green curve) will cause a decrease in the success rate. The same phenomenon also occurs on MAAL+DQN in Fig.~\ref{fig:DQN success_rate}. In MAAL+DQN, the usage of legibility raises the success rate by $10\%$. However, after increasing the legibility weight to $10^{-1}$, the success rate of DQN sharply decreases, as DQN is more sensitive to rewards compared to SARSA and Q-Learning.
\begin{figure}[h]
    \subfigure[Success Rate of MAAL+DQN]{\includegraphics[width=0.31\linewidth]{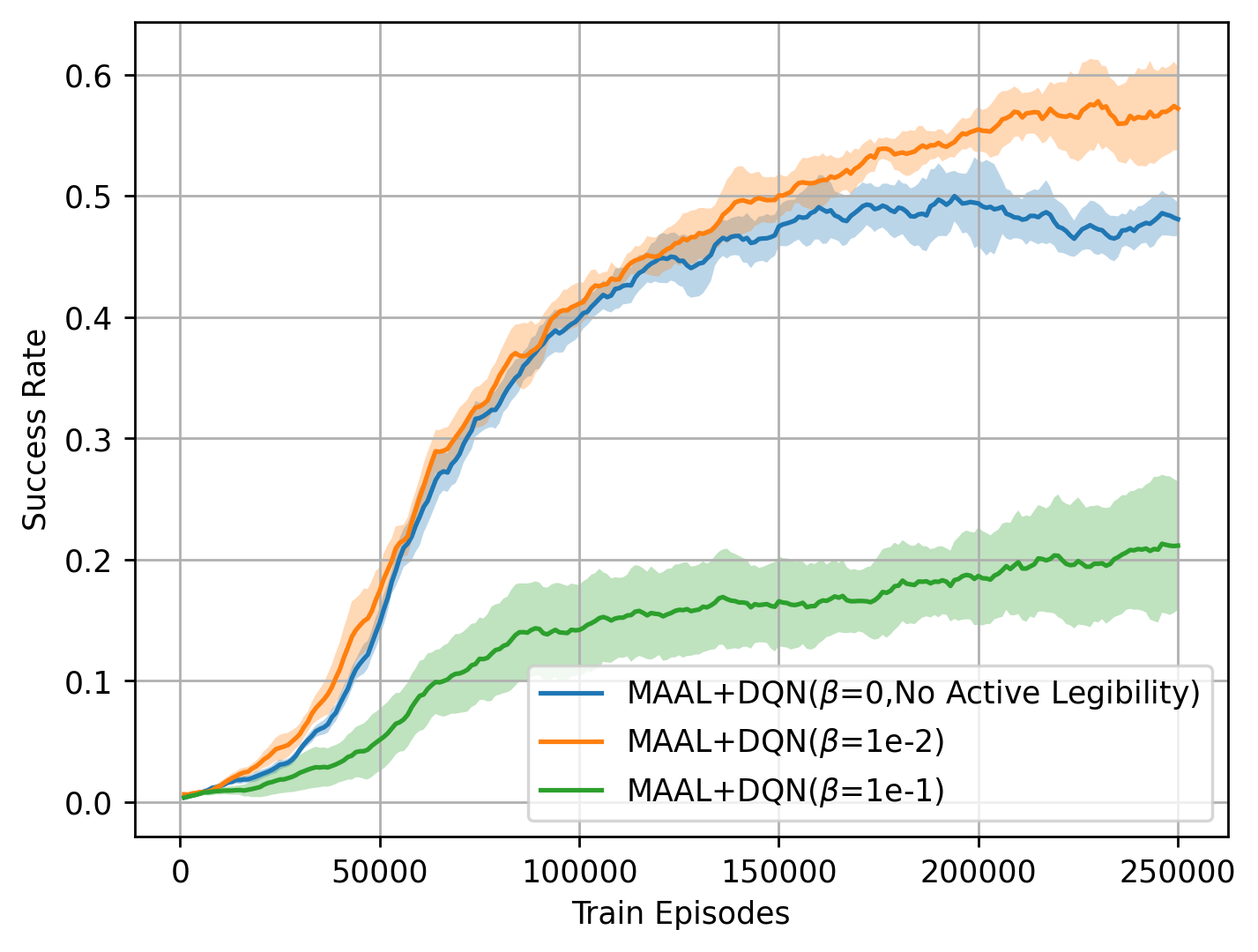}
    \label{fig:DQN success_rate}}
    \subfigure[PCR of MAAL+DQN]{\includegraphics[width=0.31\linewidth]{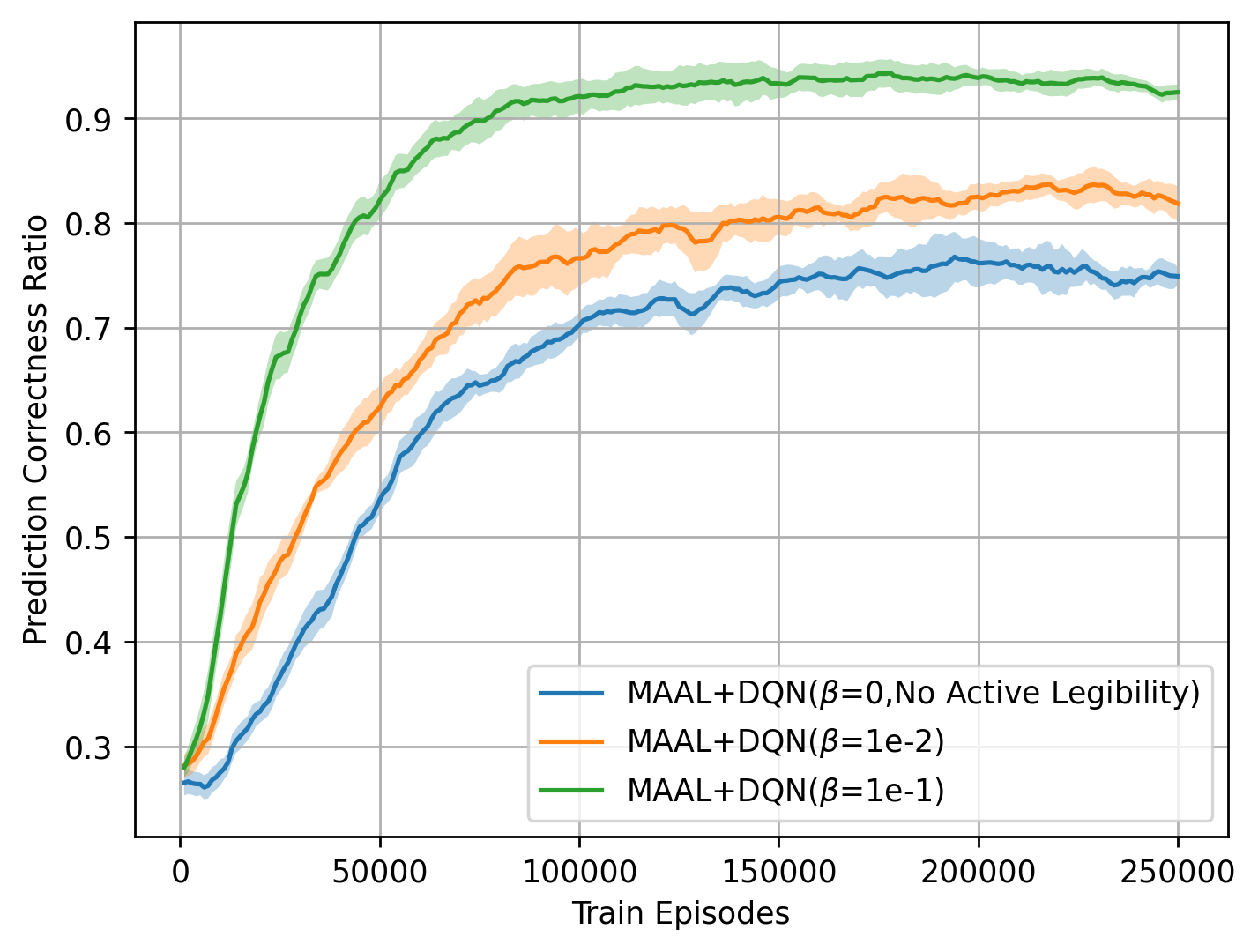}
    \label{fig:DQN Prediction Correctness Ratio}}
    \subfigure[PTR of MAAL+DQN]{\includegraphics[width=0.31\linewidth]{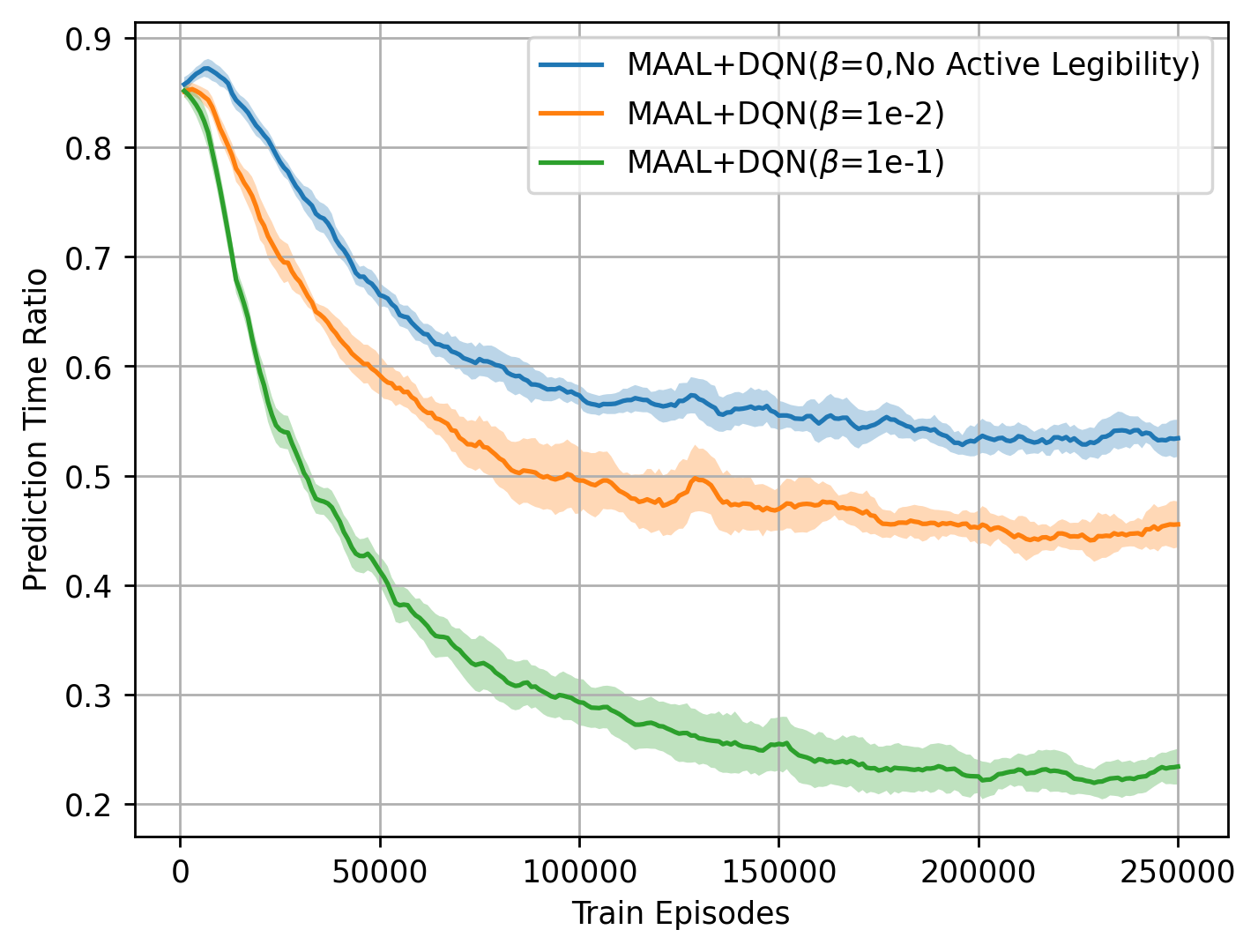}
    \label{fig:DQN Prediction Time Ratio}}
    \caption{Performance of the MAAL with DQN}
    \label{fig:Combine MAAL with and DQN}
\end{figure}

From Fig.~\ref{fig:SARSA Prediction Correctness Ratio} and Fig.~\ref{fig:SARSA Prediction Time Ratio}, we can see that legibility has a huge impact on the goal identification in SARSA. Without MAAL, the Follower can only recognize the true target of the navigator in $40\%$ episodes and obtain the correct results at almost the end of the episode. We also observe a similar pattern in MAAL+DQN from Fig.~\ref{fig:DQN Prediction Correctness Ratio} and Fig.~\ref{fig:DQN Prediction Time Ratio}. We notice that the PTR has converged to approximately $25\%$ in MAAL+QL, MAAL+SARSA, and MAAL+DQN, i.e., at least a quarter of the journey is necessary before discriminating the true goal. }

In summary, although MAAL+SARSA and MAAL+DQN do not perform as well as MAAL+QL, according to the success rate with and without MAAL, we demonstrate that extending MAAL to other RL methods is fairly feasible.

\subsection{Particle Simple Navigation}
\label{sec_Particle_Simple_Navigation}
Particle is a classic multiagent reinforcement learning environment proposed by OpenAI~\cite{mordatch2017emergence}, where motion and collisions are simulated as real rigid-body collisions, calculating velocity and displacement based on momentum and forces. Building upon Particle, we introduce a new scenario called {\it simple navigation}. Compared to LFM, the simple navigation scenario has a continuous state space with higher dimensions and more agents, making it much more complex than LFM.

\begin{figure}[h]
    \centering
\includegraphics[width=0.6\linewidth]{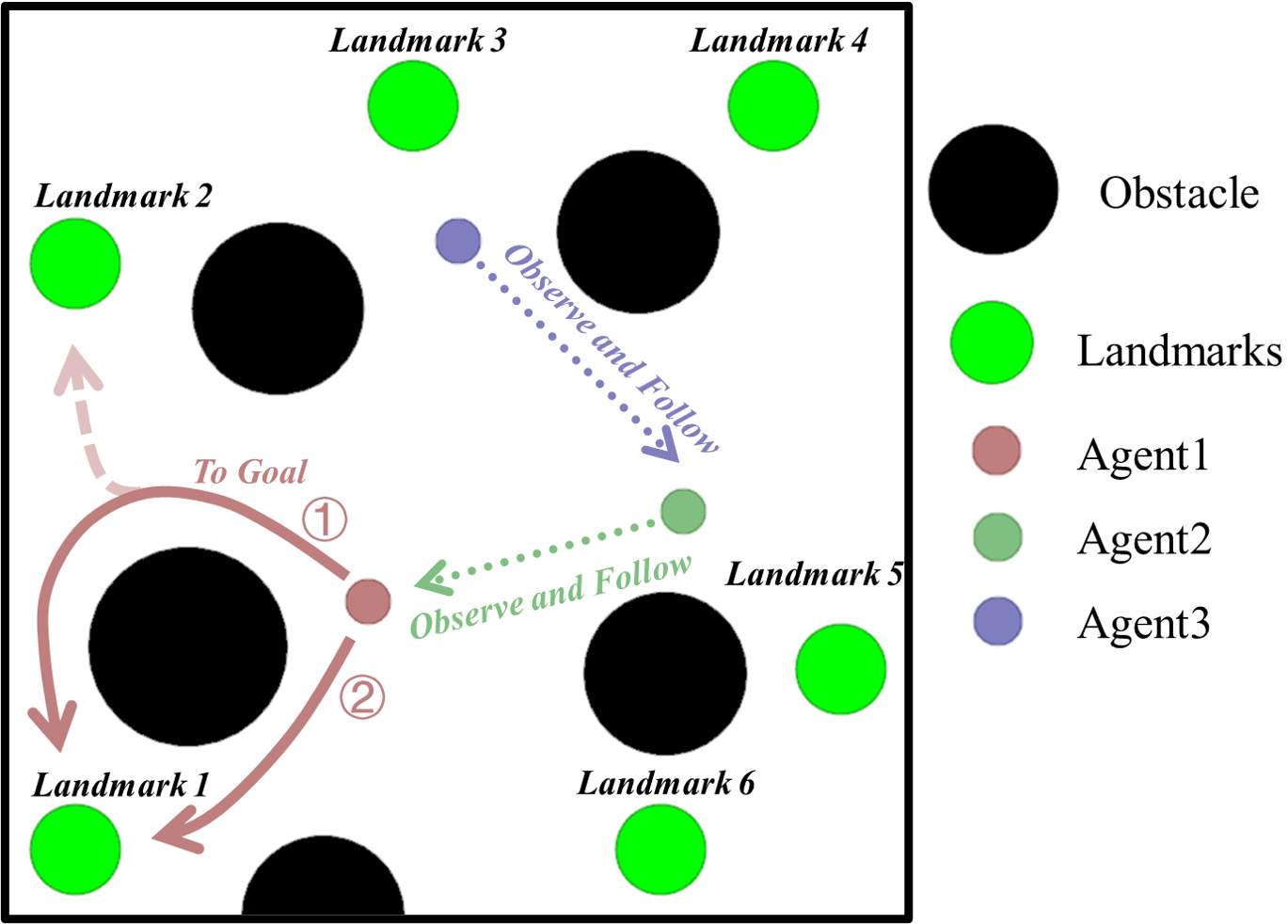}
    \caption{The Simple Navigation domain: The scenario consists of three agents, six landmarks, and several immovable circular obstacles. Similar to LFM, at the beginning of game, agent1 is assigned one landmark out of the six as target landmark, which is unknown to agents2 and agent3. The overall goal of the game is for agent1 to lead the other two agents to the target landmark. Additionally, we specify that agent2 can only observe the actions and trajectories of agent1, and agent3 can only observe the actions and trajectories of agent2, forming an observation chain.}
\label{fig: domain_simple_navi}
\end{figure}
Similar to the LFM environment , in the simple navigator environment, legibility can also help improve system performance, as illustrated in Fig.{\ref{fig: domain_simple_navi}}. In this scenario, the navigation agent, as agent1, aims to reach landmark1. Agent1 has two possible trajectories to reach the target (represented by solid red arrows), each circumventing the obstacle from a different direction. Although both trajectories can reach the target, trajectory 1 is also perceived by the follower as a potential route to landmark2. Consequently, during the first half of trajectory 1, the observer of agent1 cannot distinguish the agent's true goal from landmark1 and landmark2. In contrast, trajectory 2 does not present any ambiguity or misjudgment potential. In certain situations, such as when agents2 and agent3 are located near landmark2 in the beginning, trajectory 2 can significantly reduce the number of ineffective steps.

\subsubsection{Experiment Details}
In this experiment, to deal with high-dimensional observation space, all the agents use DQN as policy, with vectorized $[o^i\odot g^i],i=1,2,3$ as the network input, where $o^i$ is the observation vector of the agent $i$, $g^1$ is the target landmark, $\hat{g}^2$ is the agent2's prediction of agent1's target, and $\hat{g}^3$ is the agent3's prediction of agent2's target~(in one-hot encoded).
It is worth mentioning that, we used Long Short-Term Memory (LSTM)~\cite{sak2014long} for implementing plan recognition. 
LSTM is a type of recurrent neural network (RNN) capable of learning long-term dependencies and sequences of data, making it well-suited for tasks involving time-series prediction and sequence classification.
For example, agent2 uses its observations $o^2$ and the observed action  of agent1 $a^1$ to predict the distribution of agent1's target: $LSTM(O^2_{t=0}\odot a^1_{t=0},O^2_{t=1}\odot a^1_{t=1},O^2_{t=2}\odot a^1_{t=2},...)\xrightarrow{}g^2$. Similarly, agent3  uses the same kind method to observe and predict the goal of agent2.

\begin{figure}[h]
    \centering
    \includegraphics[width=0.6\linewidth]{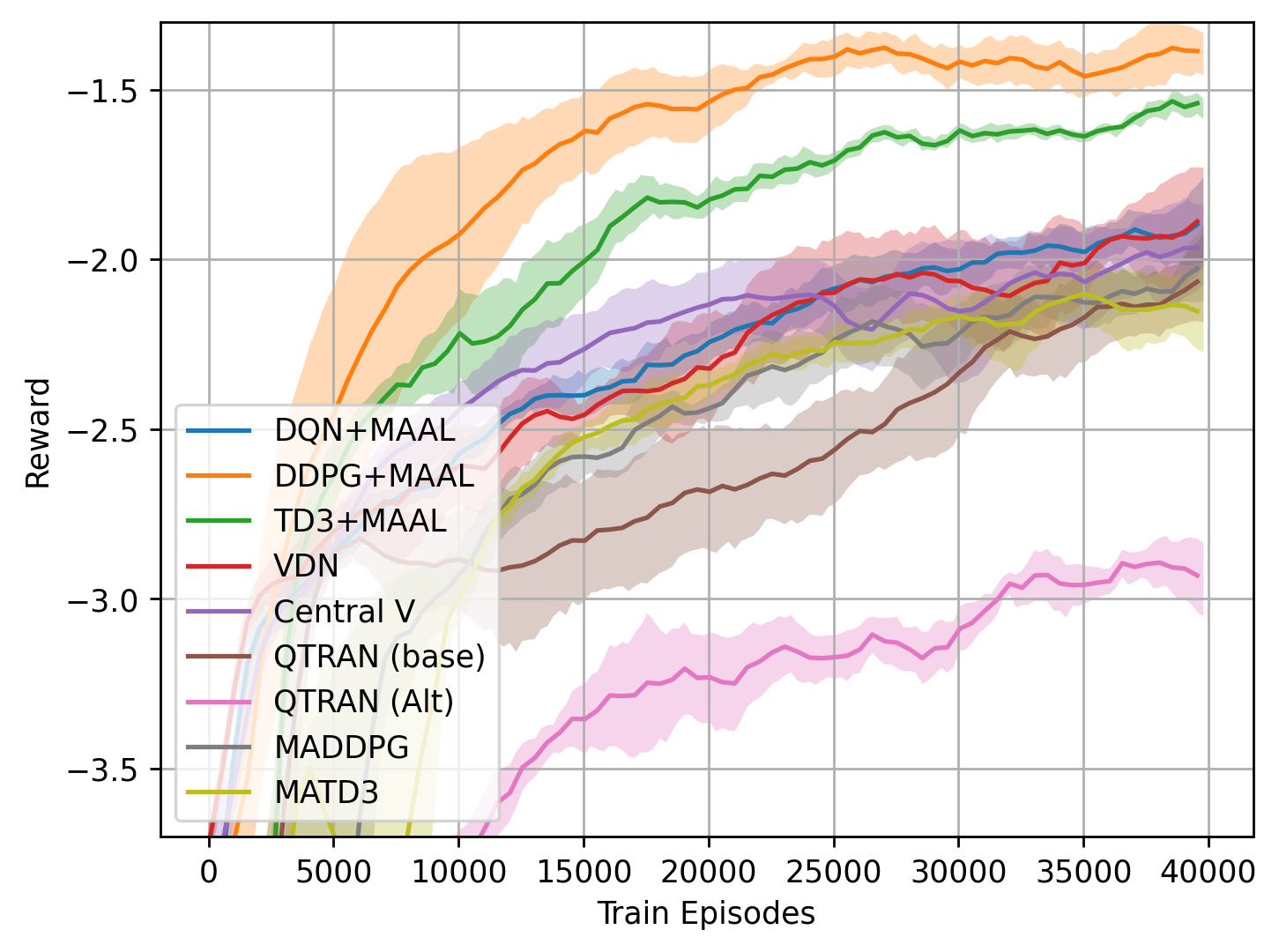}
    \caption{Episode Reward in Simple Navigation}
    \label{fig:simple_navigation_reward}
\end{figure}
The experimental results, illustrated in Fig.\ref{fig:simple_navigation_reward}, demonstrate that integrating the MAAL reward shaping with various single-agent reinforcement learning algorithms significantly enhances their performance. 
Among all the methods, DDPG+MAAL~(orange curves) achieves the highest rewards and the fastest converging speed, showing a steady and substantial improvement over the training episodes. 
The Twin Delayed Deep Deterministic Policy Gradient~(TD3)~\cite{fujimoto2018addressing} algorithm is an improved version of the DDPG algorithm, addressing some of its shortcomings by incorporating double Q-networks and delayed update mechanisms.
Notably, the performance of TD3+MAAL~(green curves) is not as well as DDPG+MAAL, likely due to the complexity introduced by TD3's double Q-networks and delayed policy updates. While these mechanisms reduce overestimation bias and improve stability in single-agent scenarios, they might hinder performance in multiagent settings where simpler and faster updates are crucial for effective coordination and interaction between agents
DQN+MAAL~(blue curves), while improved compared to its non-MAAL counterpart, does not perform as well as DDPG+MAAL or TD3+MAAL. 
Regarding other traditional MARL algorithms that do not utilize legibility and goal recognition, it can be observed that VDN (red curve), Central V (purple curve), MADDPG (gray curve), and MATD3 (yellow curve) all converge to around -2 at almost the same convergence rate. Although QTRAN-base (brown curve) also converges to a similar level, its convergence speed is significantly slower than the others. Additionally, {the alternative version of QTRAN (QTRAN-Alt, pink curves),} which constructs transformed action-value functions in different way, performs poorly, with results far below the baseline.
In conclusion, these findings demonstrate the effectiveness of the MAAL framework in enhancing cooperation and goal recognition among agents, leading to higher and more stable rewards in multiagent reinforcement learning scenarios.

\subsection{Experiment Summary}

\begin{table}[htbp]
\centering
\begin{tabular}{llll}
\hline
\textbf{Domain} & \textbf{Methods} & \textbf{Time}  & \textbf{Score}\\ \hline
\multirow{8}{*}{\makecell[{{}}]{Lead-Follow Maze  \\(250k episodes)}} & QMIX & 10.3 h & $-0.09 \pm 0.3$\\ \cline{2-4} 
 & VDN & 7.8 h & $0.49 \pm 0.46$\\ \cline{2-4} 
 & MAVEN & 17.8 h & $-0.22 \pm 0.02$ \\ \cline{2-4} 
 & IQL & 0.2 h & $0.54 \pm 0.03$ \\ \cline{2-4} 
 & MADDPG & 3.4 h & $0.00 \pm 0.00$\\ \cline{2-4} 
 & QL+MAAL & 1.3 h & {$0.89 \pm 0.05$} \\ \cline{2-4} 
 & SARSA+MAAL & 0.8 h & $-0.19 \pm 0.07$ \\ \cline{2-4} 
 & DQN+MAAL & 12.8 h & $-0.11 \pm 0.13$\\ \hline
\multirow{9}{*}{\makecell[{{}}]{Simple Navigator \\(40k episodes)}} 
 & VDN &  1.4 h & $-1.88 \pm 0.15$\\ \cline{2-4} 
 & Central V & 2.4 h & $-1.95 \pm 0.12$\\ \cline{2-4} 
 & QTRAN (base) & 3.6 h & $-2.05 \pm 0.13$\\ \cline{2-4} 
 & QTRAN (Alt) & 3.7 h & $-2.94 \pm 0.10$\\ \cline{2-4} 
 & MADDPG & 1.2 h & $-2.01 \pm 0.11$ \\ \cline{2-4} 
 & MATD3 & 1.6 h  & $-2.14 \pm 0.13$\\ \cline{2-4} 
 & DQN+MAAL & 1,3 h & $-1.88 \pm 0.12$ \\ \cline{2-4} 
 & DDPG+MAAL & 0.9 h  & $-1.39 \pm 0.06$\\ \cline{2-4} 
 & TD3+MAAL & 1.1 h  & $-1.55 \pm 0.03$\\ \hline
\end{tabular}
\caption{Summary of experimental results}
\label{tab:running_times}
\end{table}

We summarize the experimental results and running times in Table~\ref{tab:running_times}. It shows that, by enhancing the legibility of agents, we can effectively overcome the ambiguity and uncertainty in plan recognition, which allows traditional single-agent RL algorithms to become more legible, and hence significantly reducing training time costs. This is particularly evident in the lead-follow maze domain, where the policy is implemented using a Q-Table rather than a neural network, and plan recognition is achieved through Bayesian learning, resulting in lower computational complexity and faster convergence rate. 
As for the other environment, the simple navigator, MAAL does not bring significant advantages in training time due to the high computational overhead of plan recognition using LSTM. However, the MAAL algorithm still achieves the best results in the shortest time, demonstrating the applicability to problems with high-dimensional continuous state and action spaces.

\section{Conclusion and Future Work}
\label{sec_conslusion}
In this paper, we propose a Multiagent Active Legibility~(MAAL) framework to encourage agents to reveal their intentions as early as possible to achieve better collaboration in multiagent system. 
When combined with plan recognition, MAAL allows agents to utilize simple single-agent RL algorithms to achieve performance on par with, or even surpass, that of MARL methods in certain tasks.
In doing so, we employ the reward shaping technique in MAAL to make agent's actions more legible by reducing the ambiguity of its possible goals. 
Additionally, we designed two original experiments in discrete and continuous spaces: the Lead Follower Maze and Simple Navigation, both of which emphasize the speed and accuracy of goal recognition between agents. In these tests, we demonstrate the MAAL performance in those two scenarios compared to several MARL algorithms.

{
Although we have attained promising results in the environment we designed; however, the performance in other multiagent learning environments remains unknown. In future work, we will apply MAAL to solve more challenging MARL problems, such as StarCraft Multiagent Challenge~\cite{samvelyan2019starcraft}, etc. A set of benchmarks on testing the legibility's impact in MARL require more sophisticated design. It is clear that the legibility will contribute to explainable multiagent decision making from different perspectives, e.g. the agents by themselves or the observers outside the multiagent system. It would be very interesting to investigate their relations and possible unification. 
}

\appendix

\bibliographystyle{elsarticle-num} 
\bibliography{ref}




\end{document}